\DeclareMathOperator{\Tr}{tr}
\newcommand{\LL}{\mathcal{L}}
\newtheorem{theorem}{Theorem}
\begin{document}

\title{Characterizing and Improving Stability in Neural Style Transfer}

\author{
	Agrim Gupta, Justin Johnson, Alexandre Alahi, and Li Fei-Fei \\[1mm]
    Department of Computer Science, Stanford University \\
    {\tt\small agrim@stanford.edu} \hspace{2pc} {\tt\small \{jcjohns,alahi,feifeili\}@cs.stanford.edu}
}

\maketitle

\begin{abstract}
Recent progress in style transfer on images has focused on improving the quality of stylized images and speed of  methods. However, real-time methods are highly unstable resulting in visible flickering when applied to videos. In this work we characterize the instability of these methods by examining the solution set of the style transfer objective. We show that the trace of the Gram matrix representing style is inversely related to the stability of the method. Then, we present a recurrent convolutional network for real-time video style transfer which incorporates a temporal consistency loss and overcomes the instability of prior methods. Our networks can be applied at any resolution, do not require optical flow at test time, and produce high quality, temporally consistent stylized videos in real-time.
\end{abstract}


\section{Introduction}
\def\mywidth{0.17\textwidth}
\def\ratio{0.15}
\begin{figure}[t]
\centering
\center
    \includegraphics[width=\linewidth]{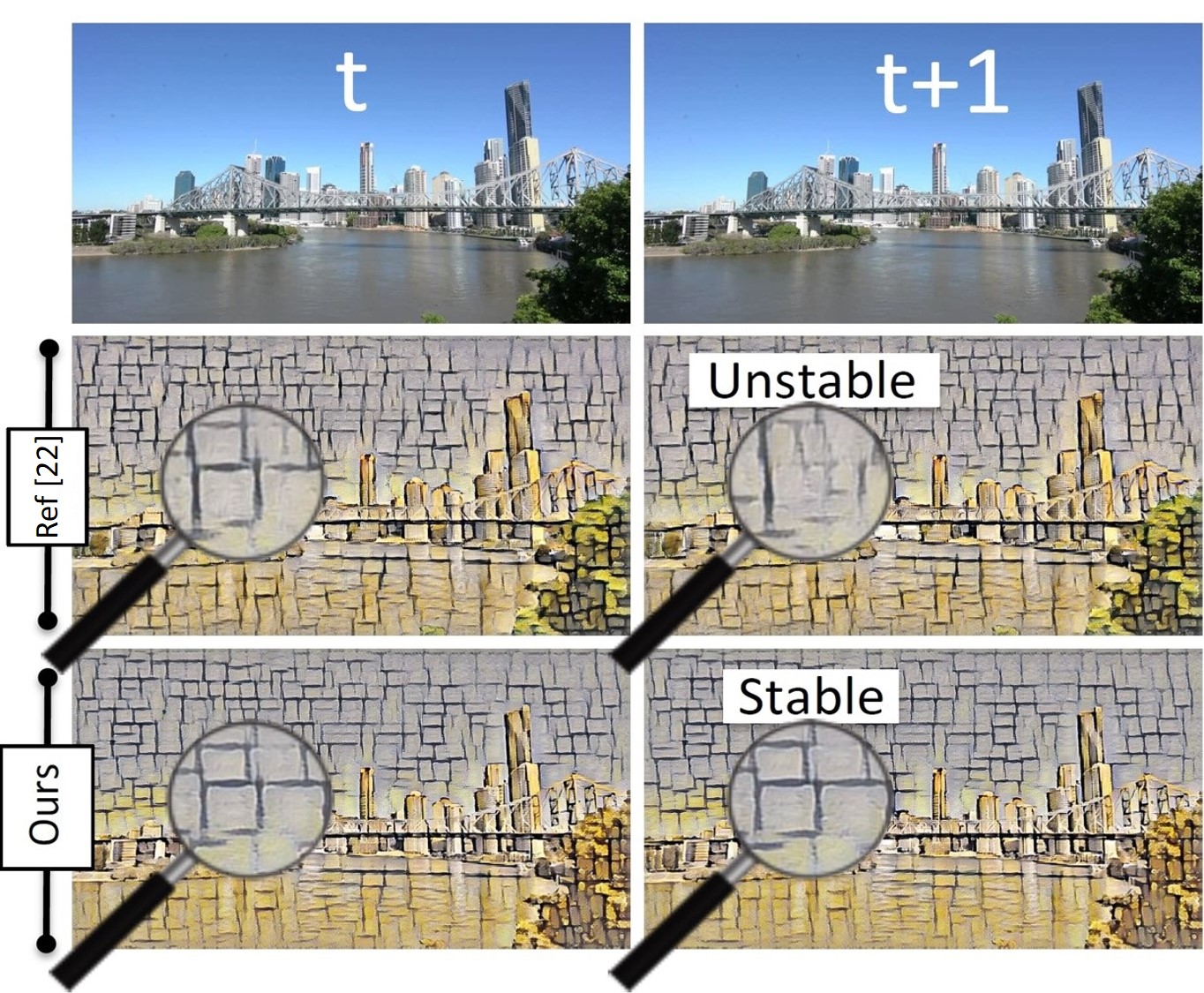}
  \caption{State-of-the-art real-time style transfer networks (e.g., \cite{john}) are highly unstable. Consecutive video frames that are visually indistinguishable to humans (top) can result in perceptibly different stylized images (middle). In this work, we characterize such instability and propose a recurrent convolutional network that produces temporally consistent stylized video in real-time (bottom).}
  \vspace{-1mm}
  \label{fig:pull}
\end{figure}
Artistic style transfer of images aims to synthesize novel images combining the \emph{content} of one image with the \emph{style} of another. This longstanding problem~\cite{ashikhmin2001synthesizing,efros2001image,hertzmann1998painterly,hertzmann2000painterly} has recently been revisited with deep neural networks~\cite{gatys}. Subsequent work has improved speed~\cite{john,ulyanov2016texture}, quality~\cite{UlyanovVL17,gatys2016controlling,wang2016multimodal}, and modeled multiple styles with a single model~\cite{supercharging}. 

Recent methods for style transfer on images fall into two categories.
Optimization-based approaches~\cite{gatys,gatys2016controlling} solve an optimization
problem for each synthesized image; they give high-quality results but can take
minutes to synthesize each image. Feedforward methods~\cite{john,ulyanov2016texture,supercharging} train neural networks to
approximate solutions to these optimization problems; after training they
can be applied in real-time. However, all these methods are highly unstable resulting in visible flickering when applied to videos; see Figure~\ref{fig:pull}. Ruder \etal~\cite{ruder2016artistic} extends the optimization-based approach from images to videos. Their method produces high-quality stylized videos, but is too slow for real-time application. 

In this paper, our goal is to perform feedforward style transfer of videos while matching the high-quality results produced by the optimization-based methods. Recent methods for style transfer use \emph{Gram matrix}
of features to represent image style: stylized images are synthesized by matching the Gram
matrix of the style image. We find that the trace of the style image's Gram matrix is closely
related to the pixel instability shown in Figure~\ref{fig:pull}. Specifically, the solution
set of the Gram matrix matching objective is a sphere with radius determined by the
trace of the style image's Gram matrix. Due to the nonconvexity of this objective, minor
changes in the content image can pull the synthesized image to different solutions of this
Gram matrix matching objective function. If all the solutions are close together
(small trace), then different solutions will still result in similar
stylized images (no instability). However, if the solutions are far apart (large trace), then
different solutions will result in very different stylized images (high instability).

Based on this insight, we propose a method which greatly improves the stability of
feedforward style transfer methods, synthesizing high-quality stylized video.
Specifically, we use a recurrent convolutional network for video stylization,
trained with a \emph{temporal consistency loss} which encourages the network to find
similar solutions to the Gram matrix matching objective at each time step.
Our contributions in this paper are twofold: 

\begin{compactitem}
\item[(i)] First, we characterize the instability of recent
style transfer methods by examining the solution set of the style transfer objective,
showing an inverse relation between the trace of a style's Gram matrix and its stability.
Our characterization applies to all neural style transfer
methods based on Gram matrix matching. 

\item[(ii)] Second, we propose a recurrent convolutional network for real-time video style transfer
which overcomes the instability of prior methods. Inspired by~\cite{ruder2016artistic},
we incorporate a loss based on optical flow encouraging the network to produce
temporally consistent results. Our method combines the speed of feedforward image
stylization~\cite{john,ulyanov2016texture,supercharging} with the quality and temporal 
stability of optimization-based video style transfer~\cite{ruder2016artistic}, giving a $1000\times$
speed improvement for stable video style transfer without sacrificing quality.
\end{compactitem}

\section{Related Work}
\textbf{Texture Synthesis.} Texture synthesis is closely related to style transfer; the goal is to infer a generating process from an input texture to enable further production of samples of the same texture. Earlier attempts in computer vision to address the problem of texture synthesis can be divided into two distinct approaches: parametric and non-parametric. Parametric approaches compute global statistics in feature space and sample images from the texture ensemble directly \cite{portilla2000parametric,zhu2000exploring,heeger1995pyramid,de1997multiresolution}. Non-parametric approaches estimate the local conditional probability density function and synthesize pixels incrementally. Methods based on this approach generate new textures either by re-sampling pixels \cite{efros1999texture,wei2000fast} or whole patches \cite{kwatra2003graphcut,efros2001image} of the original texture. \par 
Parametric methods are based on Julesz \cite{julesz1962visual} characterization of textures where two images are said to have same texture if they have similar statistical measurements over a feature space. Gatys \etal~\cite{gatys2015texture} build on the seminal work by Portilla and Simoncelli \cite{portilla2000parametric} by using feature space provided by high performing neural networks and using Gram matrix as the summary statistic. In \cite{UlyanovVL17} the authors tackle the problem of perceptual quality in feed forward based texture synthesis by proposing instance normalization and  a new learning formulation that encourages generators to sample unbiasedly from the Julesz texture ensemble. Chen and Schmidt \cite{ChenS16f} build on the work done in texture transfer by proposing a novel``style swap" based approach where they create the activations of the output image by swapping patches of the content image with the closest matching style activation patches. The swapped activation is then passed through a inverse network to generate styled image. Their optimization formulation is more stable than \cite{john,ulyanov2016texture} making it particularly suitable for video applications. Though their approach has generalization power \cite{gatys} and is stable, it can't be used for real time video application due to run-time being of the order of seconds. \par 
\textbf{Style Transfer.} Gatys et al.~\cite{gatys} showed that high quality images can be generated by using feature representations from high-performing convolutional neural networks. Their optimization based approach produces perceptually pleasing results but is computationally expensive. Johnson \etal~\cite{john} and Ulyanov \etal~\cite{ulyanov2016texture} proposed feed-forward networks which were a thousand times faster than \cite{gatys} and could produce stylized images in real-time. However, each style requires training of a separate feed-forward network and the perceptual quality of the images is inferior compared to the optimization based approach. Dumoulin \etal~\cite{supercharging} proposed a conditional instance normalization layer to address this issue, allowing one network to learn multiple styles. This results in a simple and efficient model which can learn arbitrarily different styles with considerably fewer parameters without compromising on speed or perceptual quality as compared to \cite{john,ulyanov2016texture}.\par
\textbf{Optical Flow.} Accurate estimation of optical flow is a well studied problem in computer vision with variety of real-world applications. Classical approach for optical flow estimation is by variational methods based on the work by Horn and Schunck \cite{horn1981determining}. Convolutional neural networks (CNNs) have been shown to perform as good as state of the art optical flow detection algorithms. FlowNet \cite{flow2,fischer2015flownet} matches the performance of variational methods and introduces novel CNN architectures for optical flow estimation. A full review of optical flow estimation is beyond the scope of this paper and interested readers can refer to \cite{aggarwal1988computation,sun2014quantitative,barron1994performance} \par
\textbf{Style Transfer on Videos.} Artistic stylization of images is traditionally studied under the label of non-photorealistic rendering. Litwinowicz \cite{litwinowicz1997processing} was one of the first to combine the idea of transferring brush strokes from impressionist painting to images and using optical flow to track pixel motion across video frames to produce temporally coherent output video sequences. Hays and Essa \cite{hays2004image} build on this to add further optical and spatial constraints to overcome flickering and scintillation of brush strokes in \cite{litwinowicz1997processing}. Hertzmann \cite{hertzmann1998painterly} improves on the perceptual quality of images by presenting  techniques for painting an image with multiple brush sizes, and for painting with long, curved brush strokes and later extend this work to videos in \cite{hertzmann2000painterly}. \par
Recently Ruder \etal~\cite{ruder2016artistic} extended the optimization based approach in \cite{gatys} by introducing optical flow based constraints which enforce temporal consistency in adjacent frames. They also proposed a multi-pass algorithm to ensure long term consistency in videos. Their algorithm produces extremely good results in terms of temporal consistency and per frame perceptual quality but takes a few minutes to process one frame.
\section{Stability in Style Transfer}
\label{sec:stability}

\subsection{Style Transfer on Images}
We use the style transfer formulation from \cite{gatys}, which
we briefly review.
Style transfer is an image synthesis task where we receive as input a \emph{content image}
$c$ and a \emph{style image} $s$. The output image $p$ minimizes the objective
\begin{equation} \label{eq:1}
\mathcal{L}(s,c,p)  = \lambda_c\ \mathcal{L}_c(p, c) + \lambda_s\ \mathcal{L}_s(p, s),
\end{equation}
where $\mathcal{L}_c$ and $\mathcal{L}_s$ are the \emph{content reconstruction loss} and \emph{style reconstruction loss} respectively; $\lambda_c$ and $\lambda_s$ are scalar hyperparameters governing their importance. \par 
Content and style reconstruction losses are defined in terms of a convolutional
neural network $\phi$; we use the VGG-19~\cite{Simonyan14c} network pretrained on ImageNet.
Let $\phi_j(x)$ be the \textit{j}$^{\text{th}}$ layer network activations of shape $C_j \times H_j \times W_j$ for 	image $x$. Given a set of content layers $\mathcal{C}$ and style layers $\mathcal{S}$, the content and style reconstruction losses are defined as:
\begin{equation} \label{eq:2}
\mathcal{L}_c(p, c) = \sum_{j \in \mathcal{\mathcal{C}}} \frac{1}{C_j H_j W_j} \Vert \phi_j(p) - \phi_j(c) \Vert^2_2,
\end{equation}
\begin{equation}\label{eq:3}
\mathcal{L}_s(p, s) = \sum_{j \in \mathcal{\mathcal{S}}} \frac{1}{C_j H_j W_j} \Vert G(\phi_j(p)) - G(\phi_j(s)) \Vert^2_F,
\end{equation}
where $G(\phi_j(x))$ is a $C_j\times C_j$ \emph{Gram matrix} for layer $j$ activations given by
$G(\phi_j(x)) = \Phi_{jx}\Phi_{jx}^T$,
where $\Phi_{jx}$ is a $C_j\times H_jW_j$ matrix whose columns are the $C_j$-dimensional features of $\phi_j(x)$.

Rather than forcing individual pixels of output image to match content and style images,
the content and style reconstruction losses encourage the generated image to match the high-level
features of the content image and the feature correlations of the style image.

\subsection{Gram Matrix and Style Stability}

As shown in Figure~\ref{fig:pull}, imperceptible changes in the content image $c$ can result
in drastically different stylized images $p$. However, we observe that this instability is not
uniform across all styles. Some style, such as \emph{Composition XIV} (see Figure~\ref{fig:pull})
are highly unstable, while others such as \emph{The Great Wave} (see Figure~\ref{fig:short-term}) 
show less qualitative instability.

To understand how instability depends on the style image, we consider only style loss for a single layer. Then the style transfer network minimizes the objective (dropping the subscript $j$ for notational convenience):

\begin{equation}
\begin{aligned}\label{eq:5}
& \underset{ G(\phi(p))}{\text{min}}
& & \frac{1}{C H W} \Vert G(\phi(p)) - G(\phi(s)) \Vert^2_F, \\
& \underset{\Phi_p}{\text{min}}
& & \Vert\Phi_p\Phi_p^T - \Phi_s\Phi_s^T \Vert^2_F.
\end{aligned}
\end{equation}
As motivation, consider the simple case $C=H=W=1$; then Equation~\ref{eq:5} reduces to $(\Phi_p^2-\Phi_s)^2$,
which is a nonconvex function with minima at $\Phi_p=\pm \Phi_s$, shown in Figure~\ref{fig:proof} (left).
Similarly, when $C=H=1,W=2$, shown in Figure~\ref{fig:proof} (right), the minima lie on
a circle of radius $\Phi_s$. In both the cases, the minima lie at a distance $\Phi_s$ away from the origin.
This observation holds in general:
\begin{theorem}
Let $\gamma$ be a sphere centered at the origin with radius $\Tr (\Phi_s\Phi_s^T)^\frac{1}{2}$. Then, $\Phi_p$ minimizes the objective $J(\Phi_p) = \Vert\Phi_p\Phi_p^T - \Phi_s\Phi_s^T \Vert^2_F$ iff \ $\Phi_p \in \gamma$. 
\end{theorem}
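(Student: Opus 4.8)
The plan is to split the argument into two parts: first pin down the exact minimizers of $J$ as matrices, and then convert that matrix description into the single scalar (radius) condition defining $\gamma$ by taking a trace. For the first part, observe that $J(\Phi_p)=\|\Phi_p\Phi_p^T-\Phi_s\Phi_s^T\|_F^2\ge 0$ and that the value $0$ is actually attained: since $\Phi_s\Phi_s^T$ is positive semidefinite it admits a factorization $\Phi_s\Phi_s^T=MM^T$ with $M$ of size $C\times HW$ (possible because $HW$ exceeds the rank of $\Phi_s\Phi_s^T$ in every case of interest), and $\Phi_p=M$ gives $J(\Phi_p)=0$. Hence $\Phi_p$ minimizes $J$ if and only if $\Phi_p\Phi_p^T=\Phi_s\Phi_s^T$. (One can reach the same conclusion without constructing $M$ by expanding $J(\Phi_p)=\|\Phi_p\Phi_p^T\|_F^2-2\Tr(\Phi_p\Phi_p^T\Phi_s\Phi_s^T)+\|\Phi_s\Phi_s^T\|_F^2$ and bounding the cross term by the trace inequality $\Tr(AB)\le\sum_i\lambda_i(A)\lambda_i(B)$ valid for symmetric positive semidefinite matrices, which yields $J(\Phi_p)\ge\sum_i(\sigma_i^2-\tau_i^2)^2$ with $\sigma_i,\tau_i$ the singular values of $\Phi_p,\Phi_s$, and equality exactly when $\Phi_p\Phi_p^T=\Phi_s\Phi_s^T$.)

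For the forward implication, assume $\Phi_p$ minimizes $J$, so $\Phi_p\Phi_p^T=\Phi_s\Phi_s^T$. Taking the trace of both sides and using $\Tr(\Phi_p\Phi_p^T)=\|\Phi_p\|_F^2$ gives $\|\Phi_p\|_F^2=\Tr(\Phi_s\Phi_s^T)$, i.e. $\|\Phi_p\|_F=\Tr(\Phi_s\Phi_s^T)^{1/2}$, so $\Phi_p\in\gamma$. This is the step that carries the qualitative payload of the section: every solution of the Gram-matrix matching problem lies on one origin-centred sphere whose radius is governed by $\Tr(\Phi_s\Phi_s^T)$.

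For the converse I would start from $\Phi_p\in\gamma$ and try to deduce $\Phi_p\Phi_p^T=\Phi_s\Phi_s^T$, i.e. that membership in $\gamma$ already forces $J(\Phi_p)$ to be minimal; the natural route is to diagonalize $\Phi_p\Phi_p^T$ and $\Phi_s\Phi_s^T$ and match eigenvalues and eigenvectors. I expect this to be the main obstacle, since a single scalar identity $\|\Phi_p\|_F^2=\Tr(\Phi_s\Phi_s^T)$ pins down only $\sum_i\sigma_i^2(\Phi_p)$, not the individual singular values or the singular vectors. The implication is transparent in the motivating cases — in particular when $C=1$, where $\Phi_p\Phi_p^T$ \emph{is} the scalar $\|\Phi_p\|_F^2$ appearing in the radius, recovering the $\pm\Phi_s$ minima and the circle of Figure~\ref{fig:proof} — so I would handle the general converse either by restricting attention to that regime or by reading $\gamma$ as the minimizer orbit $\{\Phi_p:\Phi_p\Phi_p^T=\Phi_s\Phi_s^T\}$, which always lies on $\gamma$ and fills it out in the low-dimensional examples.
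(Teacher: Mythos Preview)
Your forward implication coincides with the paper's: observe $J(\Phi_s)=0$, so any minimizer satisfies $\Phi_p\Phi_p^T=\Phi_s\Phi_s^T$, then take traces to land on $\gamma$. The paper does not use the eigenvalue inequality you mention; it simply exhibits $\Phi_s$ as a point where $J$ vanishes.

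For the converse the paper takes a different route from your sketch: it asserts that because both $\Phi_p$ and $\Phi_s$ lie on $\gamma$, there exists an orthogonal matrix $U$ with $\Phi_p=\Phi_sU$, whence $\Phi_p\Phi_p^T=\Phi_sUU^T\Phi_s^T=\Phi_s\Phi_s^T$ and $J(\Phi_p)=0$. Your instinct that the converse is the genuine obstacle is correct, and in fact the paper's step does not hold in general: two $C\times HW$ matrices of equal Frobenius norm need not be related by a right orthogonal factor. With $C=HW=2$, take $\Phi_s=I$ and $\Phi_p=\mathrm{diag}(\sqrt{2},0)$; both have Frobenius norm $\sqrt{2}$, yet $\Phi_p\Phi_p^T\neq\Phi_s\Phi_s^T$, so $\Phi_p\in\gamma$ while $J(\Phi_p)>0$. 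Your diagnosis---that the single scalar constraint $\sum_i\sigma_i^2(\Phi_p)=\sum_i\sigma_i^2(\Phi_s)$ cannot pin down the individual singular values or vectors---is exactly the obstruction, and your proposed remedies (restrict to $C=1$, where the Gram matrix \emph{is} the trace, or reinterpret $\gamma$ as the orbit $\{\Phi_p:\Phi_p\Phi_p^T=\Phi_s\Phi_s^T\}$) are the honest formulations. The forward direction is what carries the paper's instability message, and that part of your argument is complete and matches the paper.
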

\begin{proof}
Suppose that $\Phi_p=\Phi_s$; then $\Phi_p\Phi_p^T=\Phi_s\Phi_s^T$ and $J(\Phi_p) = 0$ so $\Phi_p$
minimizes the objective.
Now let $\Phi_p$ be any minimizer of $J$; then $J(\Phi_p)=0$, so $\Phi_p\Phi_p^T=\Phi_s\Phi_s^T$ and thus
$\Tr\Phi_p\Phi_p^T=\Tr\Phi_s\Phi_s^T$ and so $\Phi_p\in\gamma$. \\

\begin{figure}
\begin{center}
   \includegraphics[width=0.46\linewidth]{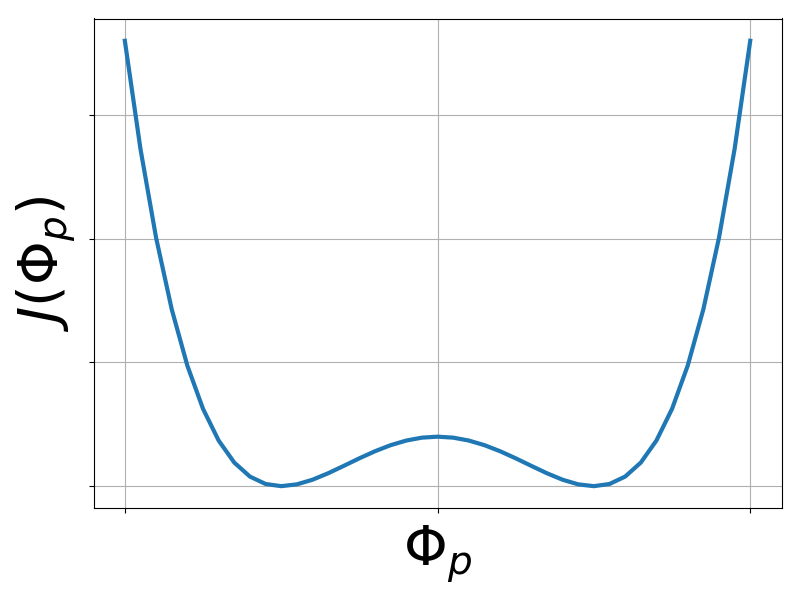} 
   \includegraphics[width=0.46\linewidth]{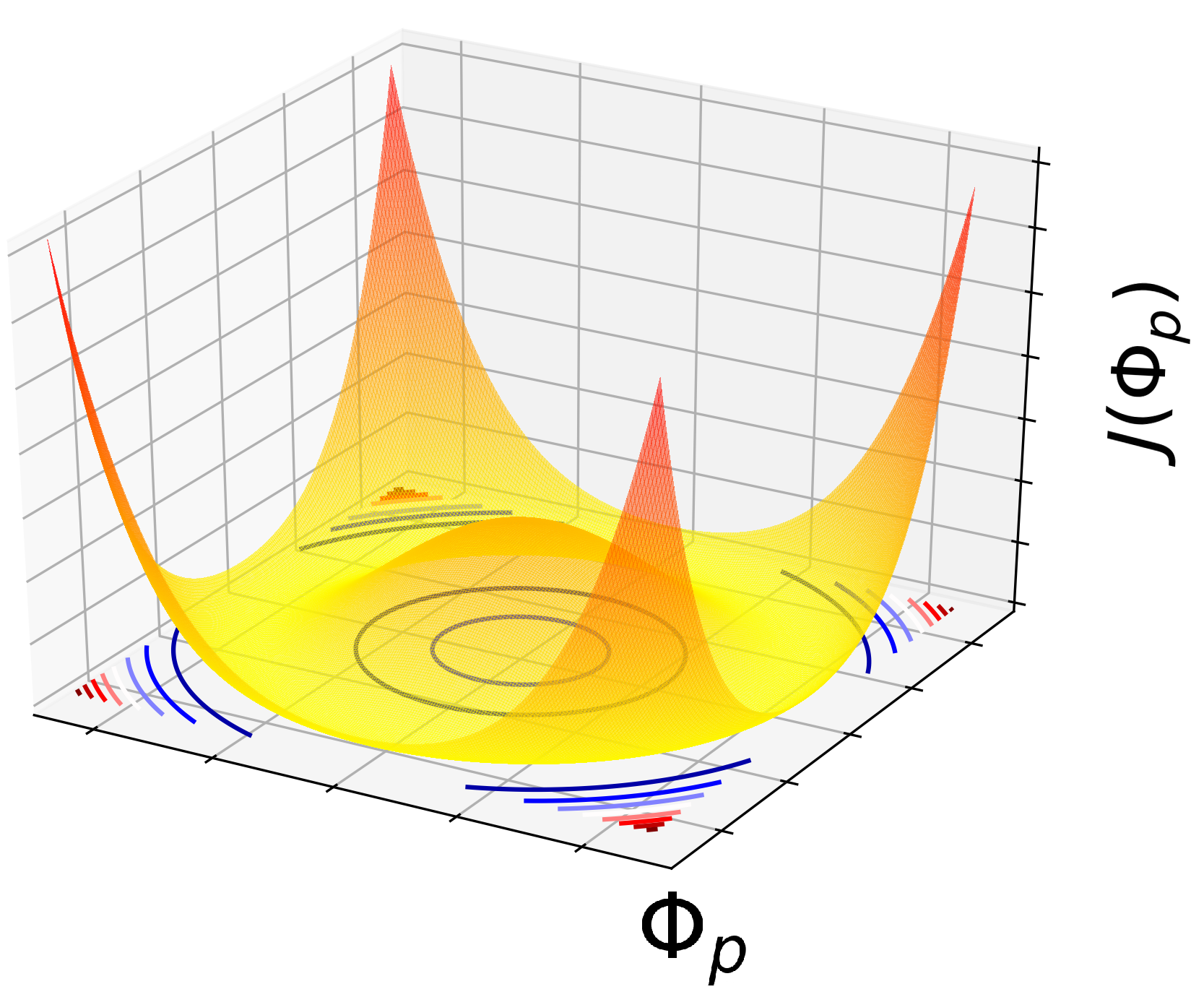} 
\end{center}
   \caption{Visualization of the minimization objective in Equation \ref{eq:5} for 1-D and 2-D cases. For 1-D the two solutions differ by $2S$. Similarly, for 2-D case the solutions lie on a circle of radius $S$. This generalizes to the $N$-D case where the solutions lie on a $N$-D sphere centered at the origin with radius $\Tr(G(\phi(p)))^\frac{1}{2}$.}
\label{fig:proof}
\end{figure}

Now let $\Phi_p\in\gamma$. Since $J(\Phi_s)=0$ we know $\Phi_s\in\gamma$; thus there exists an orthogonal rotation matrix $U$ such that $\Phi_p=\Phi_sU$. Then we have $\Phi_p\Phi_p^T=\Phi_sUU^T\Phi_s^T=\Phi_s\Phi_s^T$, so $J(\Phi_p)=0$ and thus $\Phi_p$
minimizes $J$.
\end{proof}

This result suggests that styles for which the Gram matrix trace $\Tr\Phi_s\Phi^T$ is large should exhibit more severe
instability, since solutions to the style reconstruction loss will lie further apart in feature space as $\Tr\Phi_s\Phi_s^T$
increases.

We empirically verify the relationship between $\Tr\Phi_s\Phi_s^T$ and the stability of style transfer
by collecting a small dataset of videos with a static camera and no motion; the only 
differences between frames are due to imperceptible lighting changes or sensor noise.

We then trained feedforward style transfer models on the COCO dataset~\cite{lin2014microsoft} for
twelve styles using the method of \cite{john}, and used each of these models to stylize each frame
from our stable video dataset. Due to the static nature of
the input videos, any difference in the stylized frames are due to the inherent instability of the style
transfer models; we estimate the \emph{instability} of each style as the average mean squared error
between adjacent stylized frames. In Figure~\ref{fig:relu} we plot instability vs the trace of the Gram
matrix for each style at the \texttt{relu1\_1} and $\texttt{relu2\_1}$ layers of the VGG-16 loss network;
these results show a clear correlation between the trace and instability of the styles.

\begin{figure}
	\centering
	\includegraphics[height=0.17\textwidth]{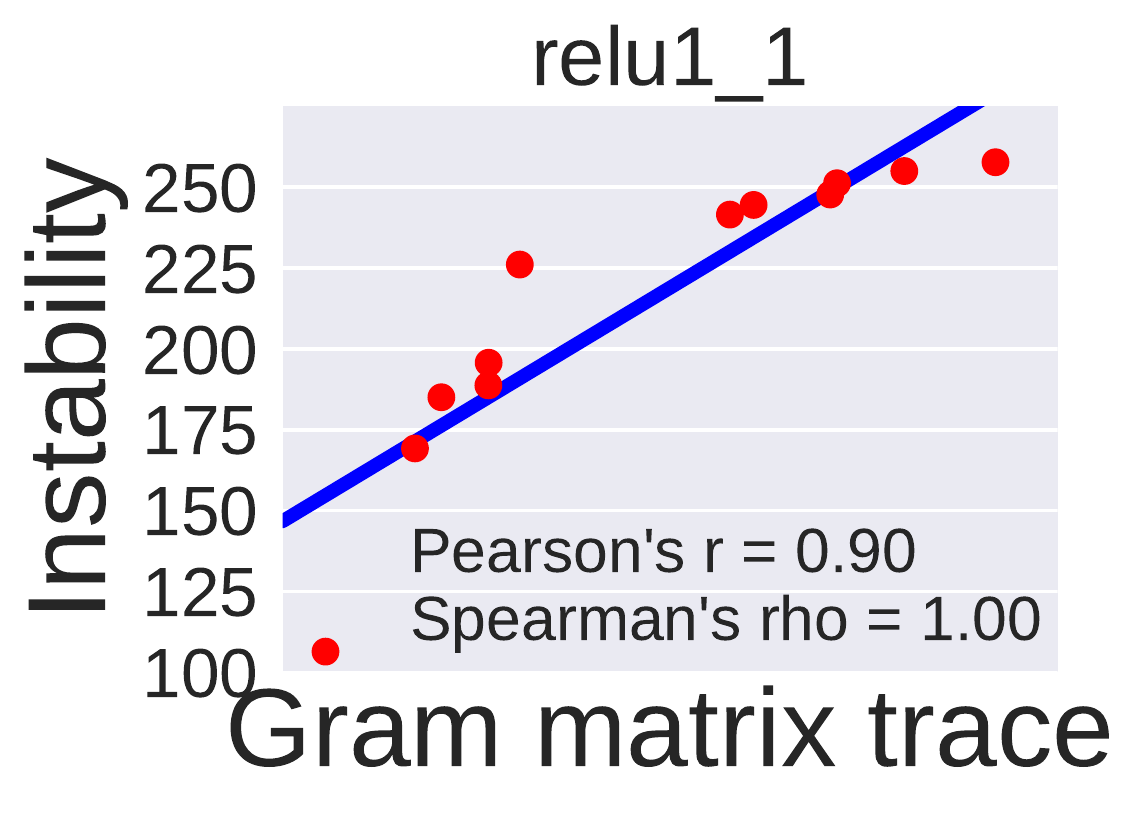}
    \includegraphics[height=0.17\textwidth]{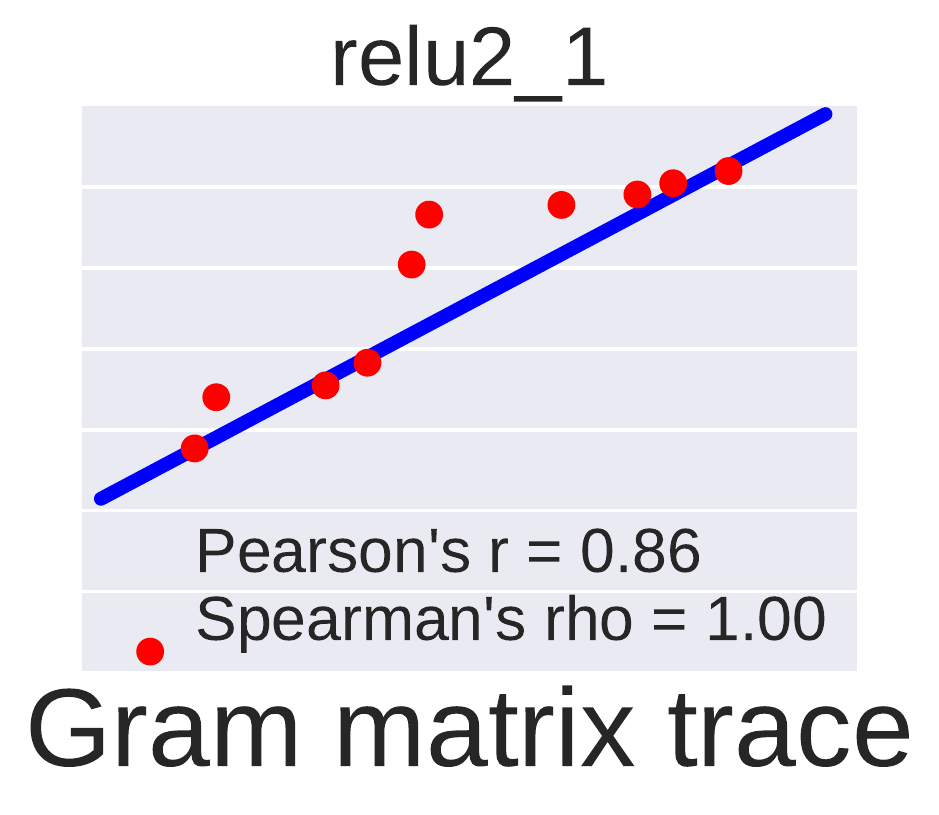}
	\caption{
		We train feedforward style transfer models for twelve styles, and define the \emph{instability}
        of a style as the mean squared error between stylized adjacent frames over a
        dataset of videos with a static camera and no motion. We also compute the trace
        of the Gram matrix at two layers of the VGG-16 loss network for each style; styles
        with larger trace tend to be more unstable.
    }
\label{fig:relu}
\end{figure}

\section{Method: Towards Stable Style Transfer}

As shown above, feedforward networks for real-time style transfer can produce unstable stylized
videos when the trace of the style's Gram matrix is large. We now present a feedforward
style transfer method that overcomes this problem, matching the speed of \cite{john} and the stability of
\cite{ruder2016artistic}.

\subsection{Overall Architecture}
Our method takes as input a sequence of content images $c_1,\ldots,c_T$ and a single style image $s$, and produces as output a sequence of stylized
images $p_1,\ldots,p_T$. Each output image $p_t$ should share content with $c_t$, share style with $s$, and be similar in
appearance to $p_{t-1}$. At each time step, the output image $p_t$ is synthesized by applying a learned \emph{style transfer network} $f_W$ to the previous stylized image $p_{t-1}$ and the 
current content image $c_t$: $p_t = f_W(p_{t-1}, c_t)$.

Similar to~\cite{john,ulyanov2016texture} we train one network $f_W$ per style image $s$. The network is trained to minimize the sum
of three loss terms at each time step:
\begin{align}
	\label{eq:method-loss}
	&\LL(W, c_{1:T}, s) \\ &= \sum_{t=1}^T(\lambda_c\LL_c(p_t, c_t) + \lambda_s\LL_s(p_t, s) + \lambda_t\LL_t(p_{t-1}, p_t)), \nonumber
\end{align}
where $\LL_c$ and $\LL_s$ are the content and style reconstruction losses from Section~\ref{sec:stability}; $\LL_t$ is \emph{temporal consistency loss} which prevents the
network output from drastically varying between time steps. The scalars $\lambda_c,\lambda_s$,
and $\lambda_t$ are hyperparameters weighting the importance of these terms.
The network $f_W$ is trained to minimize the combined loss in Equation~\ref{eq:method-loss}
on a training dataset of video sequences $\{c_{1:T}\}$ via stochastic gradient descent.

\begin{figure}
\begin{center}
\includegraphics[width=1\linewidth]{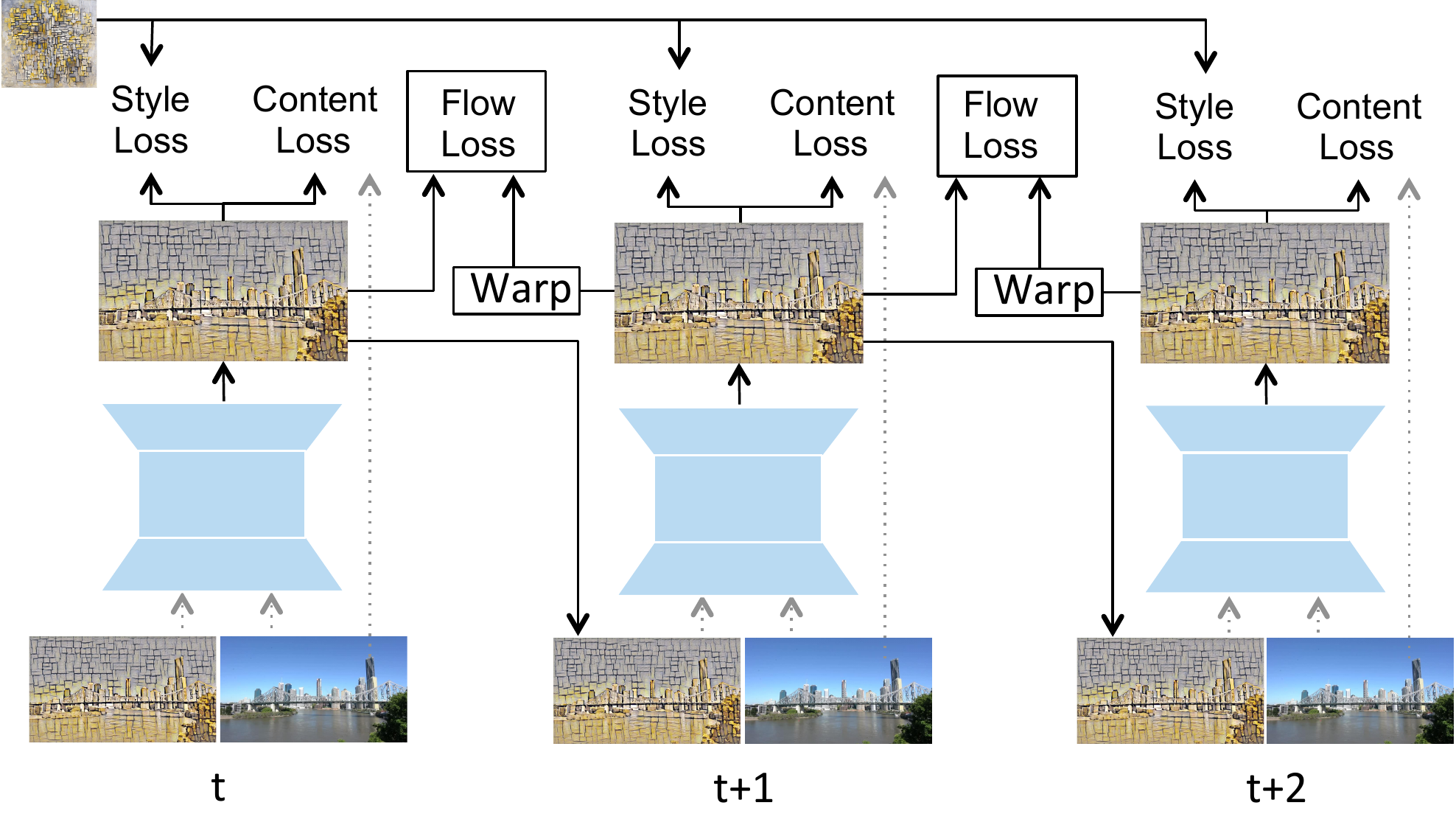}
\end{center}
   \caption{
     System overview. Our style transfer network takes as input the previous stylized image $p_{t-1}$ and the current video frame $c_t$, and produces a stylized version of the frame. The output at each timestep is fed as input at the next time step, so our system is a \emph{recurrent convolutional network}. At each time step we enforce style and content losses to ensure similarity with the input frame and style image; between each consecutive frame we enforce a \emph{temporal consistency loss} which encourages temporally stable results.
   }
\label{fig:system}
\end{figure}

\subsection{Style Transfer Network}
If our network is to produce temporally consistent outputs, then it cannot process
frames independently; it must have the capacity to examine its own previous outputs to ensure
consistency. Our networks therefore take as input both the current content image
$c_t$ and the stylized result from the previous frame $p_t=f_W(p_{t-1}, c_t)$. As shown
in Figure~\ref{fig:system}, the output from the network at each time step is fed as input to
the network at the next time step. The network $f_W$ is therefore a
\emph{recurrent convolutional network}, and must be trained via backpropagation through time~\cite{rumelhart1985learning,werbos1990backpropagation}.

The two inputs to $f_W$ are concatenated along the channel dimension, after which the architecture
of $f_W$ follows \cite{supercharging}: it is a deep convolutional 
network with two layers of spatial downsampling followed by several residual 
blocks~\cite{he2016deep} and two layers of nearest-neighbor upsampling and convolution. All 
convolutional layers are followed by instance normalization~\cite{UlyanovVL17} and ReLU 
nonlinearities.

\subsection{Temporal Consistency Loss}
By design our style transfer network can examine its own previous outputs, but 
this architectural change alone is not enough to ensure temporally consistent results.
Therefore, similar to Ruder \etal~\cite{ruder2016artistic} we augment the style and content losses with a \emph{temporal consistency loss} $\LL_t$ encouraging temporally stable results by penalizing the network when its outputs at adjacent time steps significantly vary.

The simplest temporal consistency loss would penalize per-pixel differences between
output images: $\LL_t(p_{t-1}, p_t)=\|p_{t-1}-p_t\|^2$. However, for producing high-quality stylized video
sequences we do not want stylized video frames to be exactly the same between time steps; instead we want 
brush strokes, lines, and colors in each stylized frame to transfer to subsequent frames in a manner 
consistent with the motion in the input video.

To achieve this our temporal consistency loss utilizes \emph{optical flow} to ensure that changes in output
frames are consistent with changes in input frames. Concretely, let $\mathbf{w}=(u,v)$ be the (forward) optical
flow field between input frames $c_{t-1}$ and $c_t$. Perfect optical flow gives a pixelwise correspondence
between $c_t$ and $c_{t-1}$; we want the corresponding pixels of $p_t$ and $p_{t-1}$ to match. Therefore the
temporal consistency loss penalizes the difference
\begin{equation}
	p_{t-1}(x, y) - p_{t}(x + u(x, y), y + v(x, y))
\end{equation}
for all pixel coordinates $(x,y)$. This difference can be efficiently implemented by \emph{warping}
the output frame $p_t$ using the optical flow to give a warped frame $\tilde{p_t}$, then computing the
per-pixel differences between $\tilde{p_t}$ and $p_{t-1}$. 
The use of bilinear interpolation makes this warping differentiable~\cite{jaderberg2015spatial}.

\begin{figure}
\centering
\includegraphics[width=0.4\textwidth]{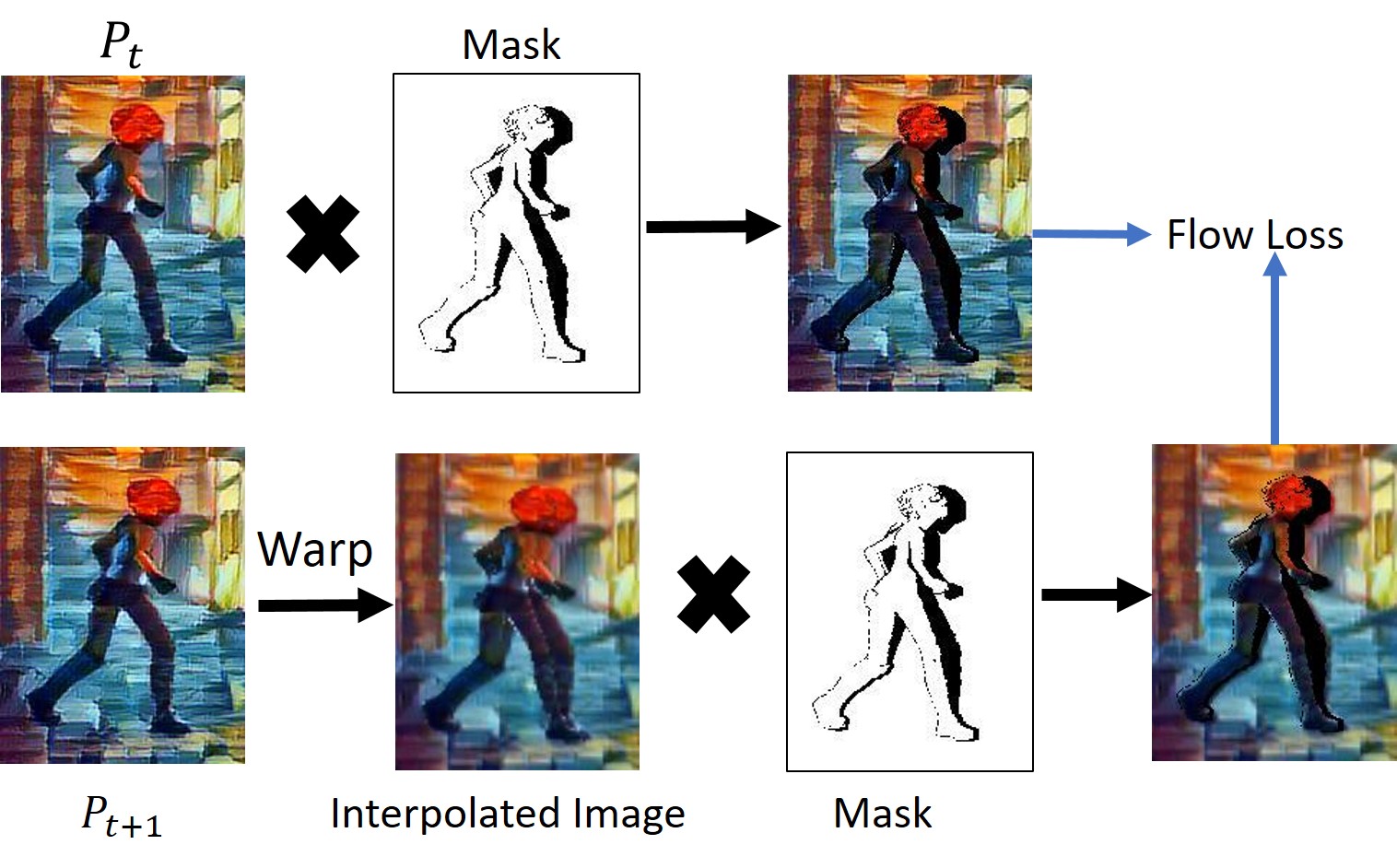}
\caption{The temporal consistency loss $\LL_t(p_{t-1}, p_t)$ warps $p_t$ using optical flow,
giving a warped frame $\tilde{p_t}$. The previous frame $p_{t-1}$ and warped frame $\tilde{p_t}$ are
multiplied by an \emph{occlusion mask}; $\LL_t(p_{t-1},p_t)$ is then the Euclidean distance between
$p_{t-1}$ and $\tilde{p_t}$.}
\label{fig:method}
\end{figure}

Due to foreground object motion, some pixels in $c_{t-1}$ may become occluded in $c_t$; likewise some pixels
which are occluded in $c_{t-1}$ may become disoccluded in $c_t$. As a result, enforcing the temporal
consistency loss between \emph{all} pixels of $\tilde{p_t}$ and $p_{t-1}$ would result in artifacts at motion
boundaries. We therefore use a ground-truth occlusion mask $m$ to avoid enforcing the temporal consistency
loss for occluded and disoccluded, giving our final temporal consistency loss:
\begin{equation} \label{eq:10}
\LL_t(p_{t-1}, p_t) =  \frac{1}{HW}\|m_t\odot p_{t-1} - m_t\odot \tilde{p_t}\|^2_F,
\end{equation}
where $m(h, w) \in [0, 1]$ is 0 in regions of occlusion and motion boundaries, $\odot$ is elementwise
multiplication, and $H,W$ are the height and width of the input frame. This loss function is summarized
in Figure~\ref{fig:method}.

Computing this loss requires optical flow and occlusion masks; however since this loss is only applied
during training, our method does not require computing optical flow or occlusion masks at test time.
 
\subsection{Implementation Details}
Following \cite{john,supercharging} we first train an image style transfer network on the COCO
dataset~\cite{lin2014microsoft} using only the content and style losses $\LL_c$ and $\LL_s$. We then
finetune the model using all three losses on the Sintel Dataset~\cite{Butler:ECCV:2012,Wulff:ECCVws:2012};
Sintel consists of rendered images and thus provides pixel-perfect optical flow and occlusion masks.
Finetuning rather than training from scratch allows for a more controlled comparison with previous methods
for feedforward image stylization. During training we resize all video frames to $512\times218$
and use random horizontal and vertical flips for data augmentation; we train for 10 epochs with BPTT for 4 
time steps using Adam~\cite{kingma2014adam} with learning rate $1\times10^{-3}$.

\section{Experiments}\label{sec:exp}

\begin{table}
  \centering
  \scalebox{0.85}{
  \begin{tabular}{|c|c|c|c|c|c|}
    \hline
    Style  
    & \multicolumn{1}{|p{1.5cm}|}{\centering Real-Time\\ Baseline \cite{john}}
    & \multicolumn{1}{|p{1.5cm}|}{\centering Optim \\ Baseline \cite{ruder2016artistic}}
    & Ours \\
    \hline
    The Wave           & 24.3 / 0.47 & \textbf{25.5} / 0.48 & 24.8 / \textbf{0.54} \\
    Metzinger          & 23.6 / 0.31 & \textbf{24.4} / 0.37 & 24.2 / \textbf{0.42} \\
    Composition XIV    & 23.8 / 0.31 & 24.0 / 0.38 & \textbf{24.2} / \textbf{0.42} \\
    Mosaic             & 23.7 / 0.31 & \textbf{24.4} / 0.37 & 24.0 / \textbf{0.39} \\
    Rain Princess      & 23.8 / 0.41 & \textbf{25.2} / 0.45 & 24.4 / \textbf{0.49} \\
    \hline
  \end{tabular}}
  \vspace{1mm}
  \caption{
  	We evaluate the stability of each method by finding corresponding $100\times100$ background patches 
    between adjacent video frames from the DAVIS dataset, then computing PSNR / SSIM between these patches
    in the stylized versions of the frames. We report mean PSNR / SSIM between stylized corresponding
    patches across all frames from all videos of the DAVIS dataset. Our method is generally more stable
    than the Real-Time baseline, and has comparable stability to the Optim baseline.
  }
  \vspace{-2mm}
  \label{table:psnr-ssim}
\end{table}

Our experiments show that our method results in stylized video sequences with comparable image
quality and stability as optimization-based methods~\cite{ruder2016artistic}, while retaining the speed advantage of feedforward methods ~\cite{john,ulyanov2016texture}.

\subsection{Baselines}
We compare our method with two state-of-the-art approaches for image and video stylization.

\textbf{Real-Time Baseline}~\cite{john,supercharging}. We train feedforward networks for image
stylization and apply the resulting network to each video frame independently. 
This method allows for high-quality image stylization in real-time, but leads to severe temporal instability 
when applied to videos.

\textbf{Optim Baseline}~\cite{ruder2016artistic}. This method explicitly minimizes an objective function
to synthesize stylized frames, leading to very stable, high-quality results, but requiring
minutes of processing per video frame. We use the single-pass algorithm from~\cite{ruder2016artistic}; their multipass algorithm improves long-term temporal consistency at the cost increased runtime.
We use the open-source code released by the authors of~\cite{ruder2016artistic}.

\begin{figure}[h]
	\centering
     \includegraphics[width=0.82\linewidth]{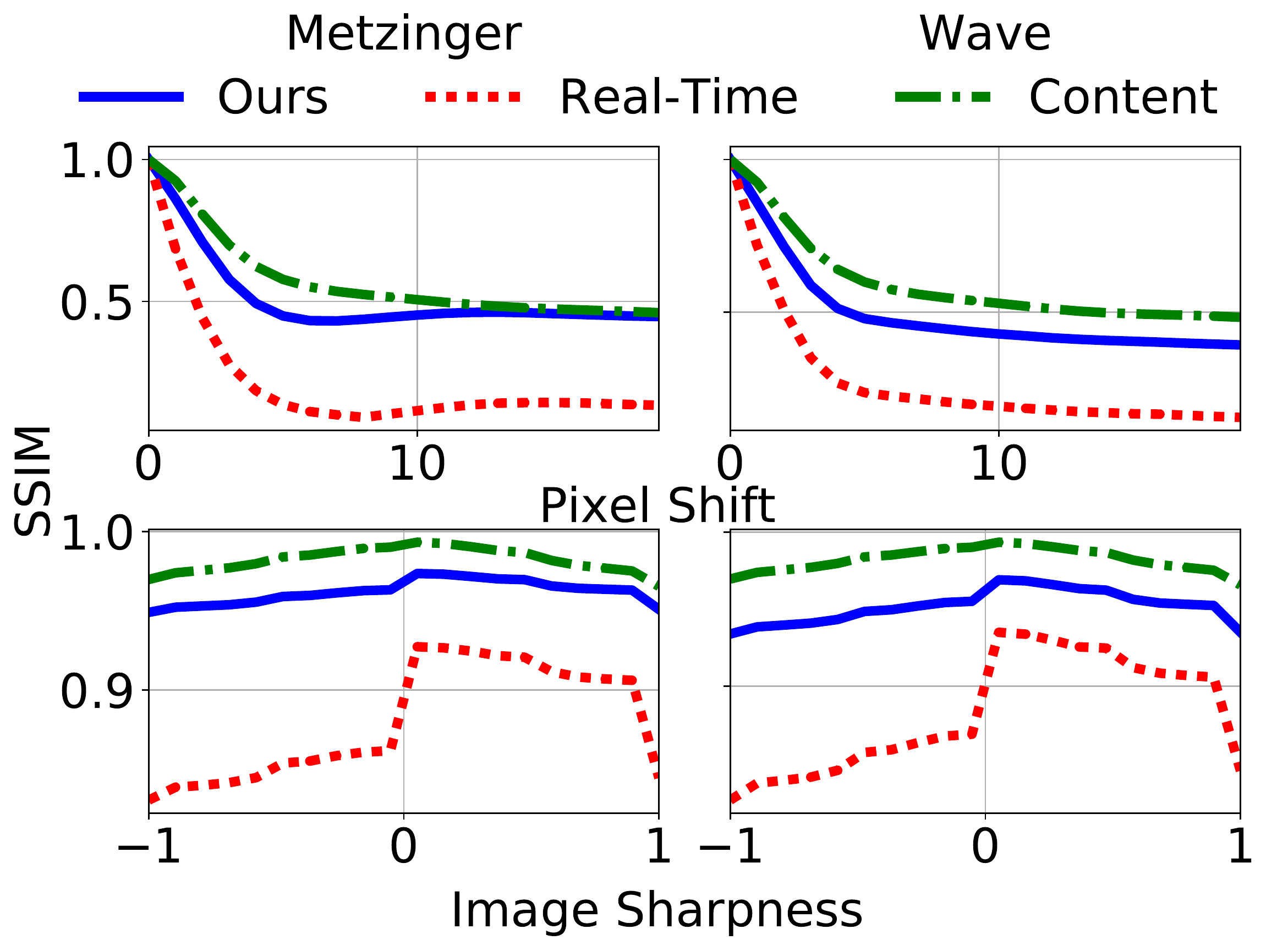}
     \caption{
     	We measure the stability of our method with respect to controlled image distortions by
    	taking an image patch, distorting it, then computing SSIM between the original and distorted patch 
        (Content); we then stylize both the original and distorted patches using our method and the
        Real-Time baseline, and compute SSIM between the stylized original and stylized distorted patches.
        Varying the magnitude of the distortion measures the stability of the method with respect to
        that distortion.
        We consider shifting the patch by 0 to 19 pixels in the input frame (top) and 
        applying blur and sharpening kernels to the patch with varying strength (bottom); repeating the 
        experiment for two styles (Metzinger, left and Wave, right). All results are averaged over 
        random $256\times256$ background patches from 100 random frames from DAVIS videos.
        Our method is much more robust to these controlled distortions.
     }
     \vspace{-1mm}
  \label{fig:stability_curve}
\end{figure}

\subsection{Datasets}
\textbf{Sintel.} The Sintel Dataset~\cite{Butler:ECCV:2012,Wulff:ECCVws:2012} consists of $35$ video
sequences from an animated movie, divided into a training set of $23$ videos and a test set of $12$ videos;
each sequence has between $20$ and $50$ frames.
We use Sintel for training since it provides pixel-perfect optical flow and occlusion masks; we show
qualitative results on the Sintel test set.

\textbf{DAVIS.} The DAVIS dataset~\cite{Perazzi2016} comprises $50$ real-world video sequences with an average
of $69$ frames per sequence. These videos include effects such as occlusion, motion-blur,
appearance change, and camera motion, making it a challenging benchmark for video stylization.
Each video frame is annotated with a ground-truth foreground/background segmentation mask.
We use this dataset for qualitatively and quantitatively evaluating all methods.

\subsection{Quantitative Evaluation}
\label{sec:stability-eval}

\textbf{Controlled Distortions.}
If a style transfer method is stable, then small changes to its input should result in modest changes
to its output. Before moving to unconstrained videos, we can first measure the stability of our method
in a more controlled setting using artificial image distortions.

Given an image patch $c$, we apply a distortion $d$ to give a distorted patch $d(c)$. We measure SSIM 
between $c$ and $d(c)$, then apply a trained style transfer network $f_W$ to both $c$ and $d(c)$, and 
measure SSIM between $f_W(c)$ and $f_W(d(c))$; we then repeat the process, varying the magnitude of the 
distortion $d$. If the network $f_W$ is robust to the distortion, then as the magnitude of $d$ increases the 
image similarity between $f_W(c)$ and $f_W(d(c))$ should decay at the same rate as the similarity between
$c$ and $d(c)$.

In Figure~\ref{fig:stability_curve} we show the results of this experiment for two types of distortion:
translation and blurring / sharpening, comparing our method against the Real-Time baseline on two styles.
In all cases, as distortion magnitude increases our
method shows a decay in image similarity comparable to that between $c$ and $d(c)$; in contrast the image
similarity for the baseline decays sharply. This shows that compared to the baseline, our method is 
significantly more robust to controlled distortions.

\begin{table}[t]
\begin{center}
\scalebox{0.8}{
\begin{tabular}{|c|c|c|c|c|c|}
\hline
Image-Size
& \multicolumn{1}{|p{1.5cm}|}{\centering Real-Time\\ Baseline \cite{john}}
& \multicolumn{1}{|p{1.5cm}|}{\centering Optim \\ Baseline \cite{ruder2016artistic}}
& Ours 
& \multicolumn{1}{|p{1.5cm}|}{\centering Speedup \\ vs \\ \cite{ruder2016artistic}}\\
\hline
$256 \times 256$ & 0.024   & 22.14 & 0.024 &  922x \\
$512 \times 512$ & 0.044   & 59.64 & 0.044 & 1355x \\
$1024 \times 1024$ & 0.141 & 199.6 & 0.141 & 1415x \\
\hline
\end{tabular}
}
\end{center}
\caption{Speed (in seconds) for our method vs the two baseline methods for processing a single video frame for varying resolutions. For the Optim baseline we use the default of 1000 iterations. Our method generates stylized video matching the speed of real-time baseline method and the temporal consistency of the Optim baseline. All methods are benchmarked on a Titan X Pascal GPU.} \label{table:speed} 
\end{table}

\textbf{Video Stability.} We aim to show that our method can stylize real-world videos, matching the stability of the optim baseline. The instability of the real-time baseline typically manifests most strongly in background image regions with
relatively little motion. To quantitatively measure this phenomenon, we use the foreground/background masks
from the DAVIS dataset.

For each video sequence in the DAVIS dataset, we find corresponding $100\times100$ pixel patches in adjacent
frames that do not intersect the foreground object as follows: We first choose a random background patch
at time $t$, then find the patch at time $t+1$ which is within 20 pixels of the first patch and maximizes the PSNR
between the two patches. We then compute PSNR and SSIM~\cite{wang2004image} between the stylized versions of these 
corresponding patches, and report the average PSNR and SSIM across all methods, videos\footnote{Running the Optim baseline for all videos and styles would take approximately 25 days on a GPU, which is computationally infeasible. Therefore, for this method we select three random videos for each style.}, and across 5 styles; results are shown in
Table~\ref{table:psnr-ssim}.

\def\mywidth{0.17\textwidth}
\def\ratio{.15}
\begin{figure}[ht!]
\centering
\scalebox{0.85}{\vbox{
\center
\hspace{-0.02\textwidth}
    \raisebox{8.5mm}{
    \begin{minipage}[c][3mm][c]{3mm}
      \centering
      \rotatebox{0}{\centering \textbf{$t_1$}}
    \end{minipage}
  }
 	\includegraphics[trim={150px 0px 450px 120px},width=\ratio\textwidth,clip=true]{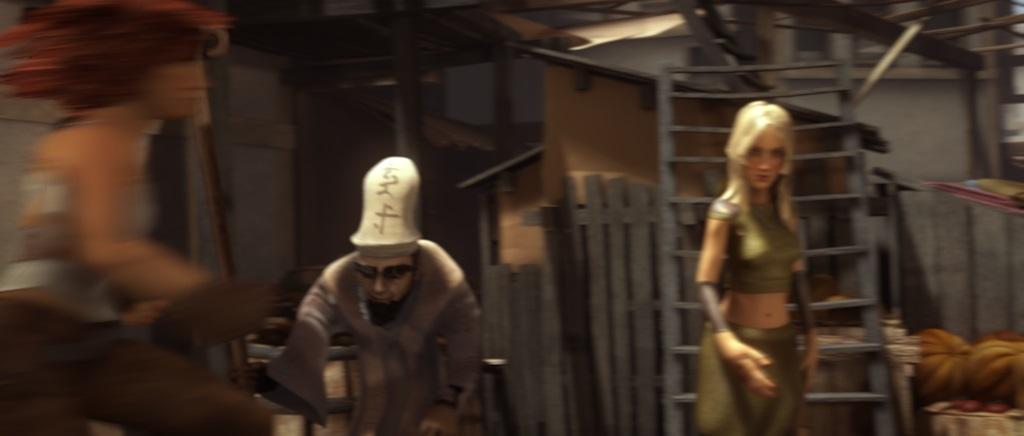}  
	\includegraphics[trim={150px 0px 450px 120px},width=\ratio\textwidth,clip=true]{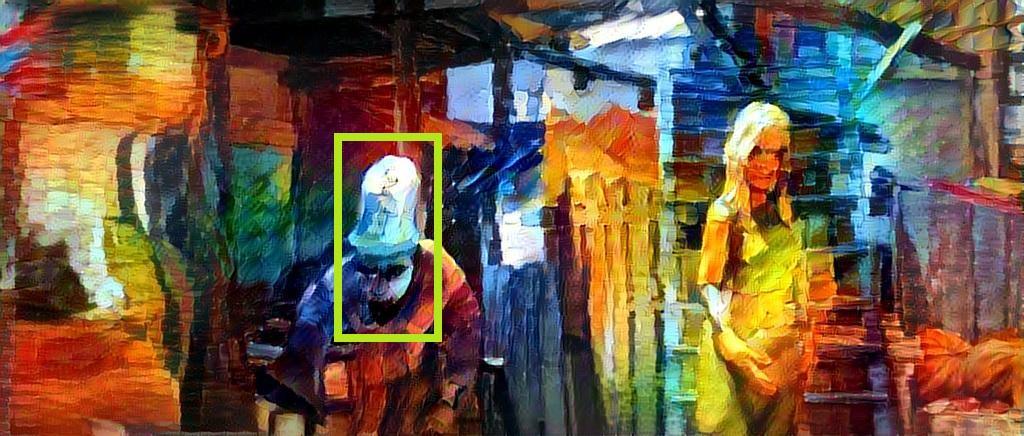}
  	\includegraphics[trim={150px 0px 450px 120px},width=\ratio\textwidth,clip=true]{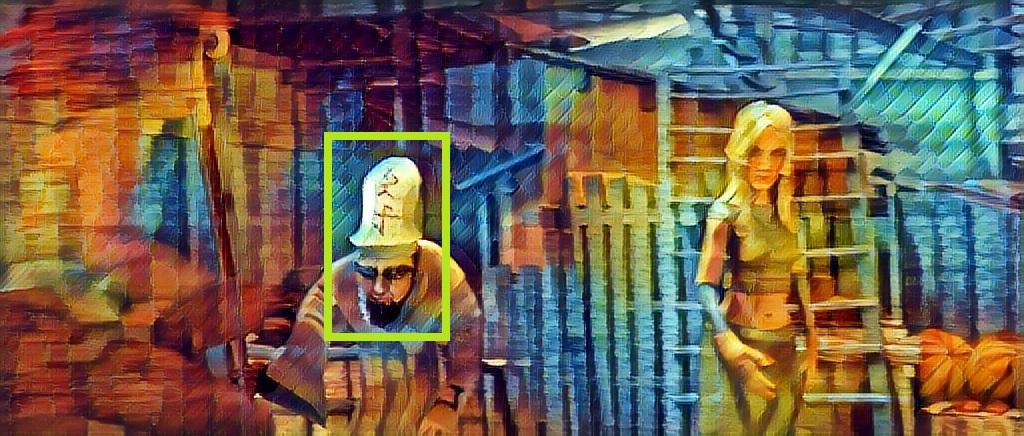} \\
\hspace{-0.02\textwidth}
    \raisebox{8.5mm}{
    \begin{minipage}[c][3mm][c]{3mm}
      \centering
      \rotatebox{0}{\centering \textbf{$t_2$}}
    \end{minipage}
  }
 	\includegraphics[trim={150px 0px 450px 120px},width=\ratio\textwidth,clip=true]{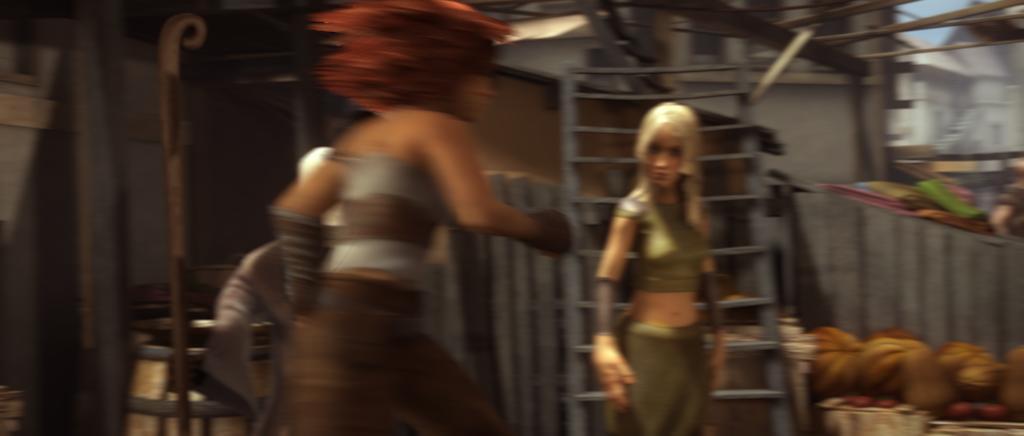}  
	\includegraphics[trim={150px 0px 450px 120px},width=\ratio\textwidth,clip=true]{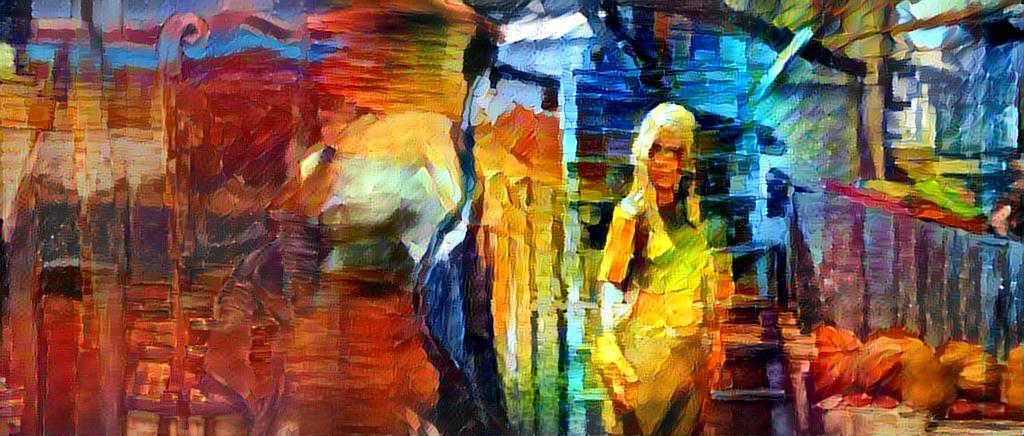}
  	\includegraphics[trim={150px 0px 450px 120px},width=\ratio\textwidth,clip=true]{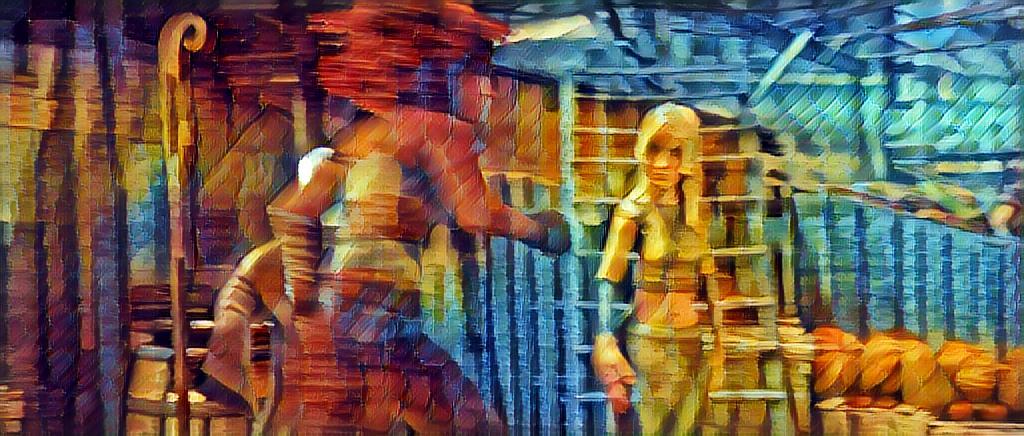}\\
\hspace{-0.02\textwidth}
    \raisebox{8.5mm}{
    \begin{minipage}[c][3mm][c]{3mm}
      \centering
      \rotatebox{0}{\centering \textbf{$t_3$}}
    \end{minipage}
  }
 	\includegraphics[trim={150px 20px 450px 100px},width=\ratio\textwidth,clip=true]{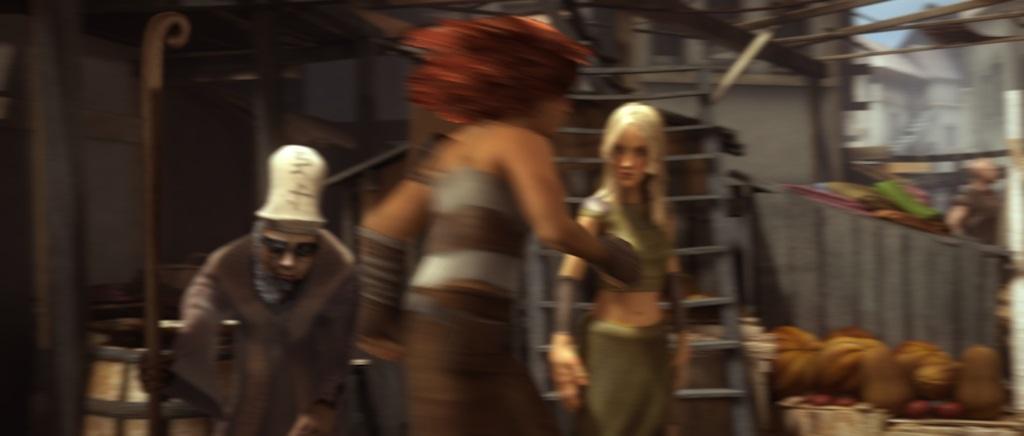}  
	\includegraphics[trim={150px 20px 450px 100px},width=\ratio\textwidth,clip=true]{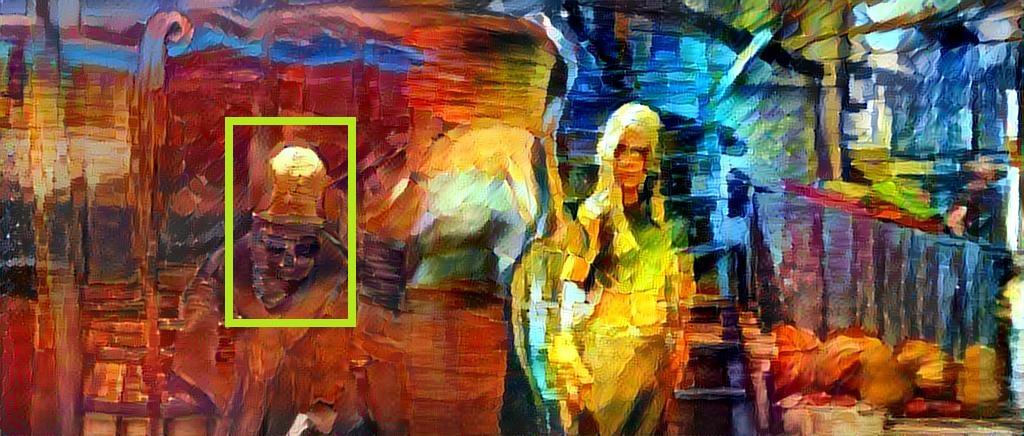}
  	\includegraphics[trim={150px 20px 450px 100px},width=\ratio\textwidth,clip=true]{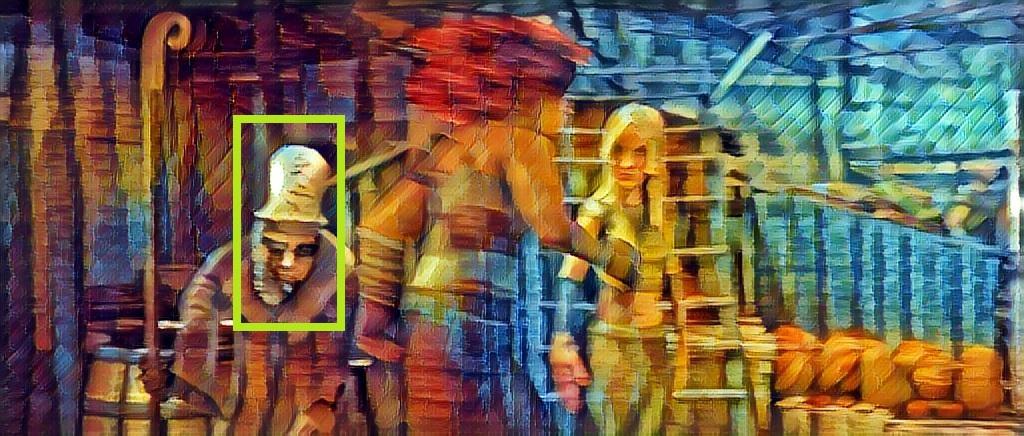}\\
    \raisebox{1mm}{
    \begin{minipage}[c][0.1mm][c]{3mm}
    \end{minipage}
  }
  \begin{minipage}{\ratio\textwidth}
    \centering \textbf{Content}
  \end{minipage}
    \begin{minipage}{\ratio\textwidth}
	\centering \textbf{Optim Baseline}
  \end{minipage}
  \begin{minipage}{\ratio\textwidth}
    \centering \textbf{Ours}
  \end{minipage} \\
  }}
  \caption{Video sequence where our method performs better than the Optim baseline. We consider three consecutive video frames. The man gets occluded in $t_2$ and on dis-occlusion in $t_3$ the style of the man is different from $t_1$ for Optim baseline. In case of our method the styles remains the same.}
  \label{fig:long-term}
\end{figure}

This experiment quantifies the instability of the real-time baseline, and shows that our method produces stylized videos with stability comparable to the Optim baseline.

\textbf{Speed.}  Table \ref{table:speed} compares the runtime of our method with the baselines for several image sizes; for the Optim baseline we exclude the time for computing optical flow.

At test-time our method produces temporally consistent stylized video frames without explicitly computing
optical flow. It matches the speed of the real-time baseline; both are roughly three orders of magnitude
faster than the Optim baseline. Our method can process images of size $512 \times 512$ at $20$ FPS, making it 
feasible to run in real-time. 

\subsection{Qualitative Evaluation}
\textbf{Short-Term Consistency.} Figure \ref{fig:short-term} shows patches from adjacent 
frames of stylized videos from the DAVIS dataset for different styles. The real-time baseline method is unable to produce the same stylization for background regions across consecutive video frames; these sudden changes between adjacent frames manifest as a severe ``flickering'' effect in video. In contrast, the Optim baseline and our method both result in consistent stylization of patches across adjacent frames, eliminating this flickering.

Figure \ref{fig:short-term} also showcases the dependence between Gram matrix trace and style instability. Both \emph{Portrait de Metzinger} and \emph{Rain Princess} style images have very high Gram matrix traces, causing the instability in the real-time baseline. The trace of the Gram matrix for \emph{The Great Wave} is much smaller, and correspondingly the real-time baseline shows less instability on this style.

\textbf{Long-Term Stability.} 
One challenging problem in video stylization is \emph{long-term stability}: When an object is occluded and then reappears, it should have the same style. Although we do not explicitly enforce long-term consistency in our loss, we find that our method nevertheless sometimes shows better long-term consistency than the Optim baseline; see Figure~\ref{fig:long-term}. In this example the man is visible at $t_1$, is occluded by the girl at $t_2$, and reappears at $t_3$. In our method the man has similar appearance at $t_1$ and $t_3$, but the baseline ``smears'' the red color of the girl onto the man in $t_3$. The authors of \cite{ruder2016artistic} also propose a multi-pass version of their algorithm which explicitly addresses this issue but it requires at least $10$ passes over the video.  

\def\mywidth{0.17\textwidth}
\def\ratio{0.13}
\begin{figure}[t]
\centering
\scalebox{0.9}{\vbox{
\center
    \raisebox{11mm}{
    \begin{minipage}[c][21mm][c]{3mm}
      \centering
      \rotatebox{90}{\centering \textbf{Content}}
    \end{minipage}
  }
  \includegraphics[trim={200px 240px 650px 10px},width=\ratio\textwidth,clip=true]{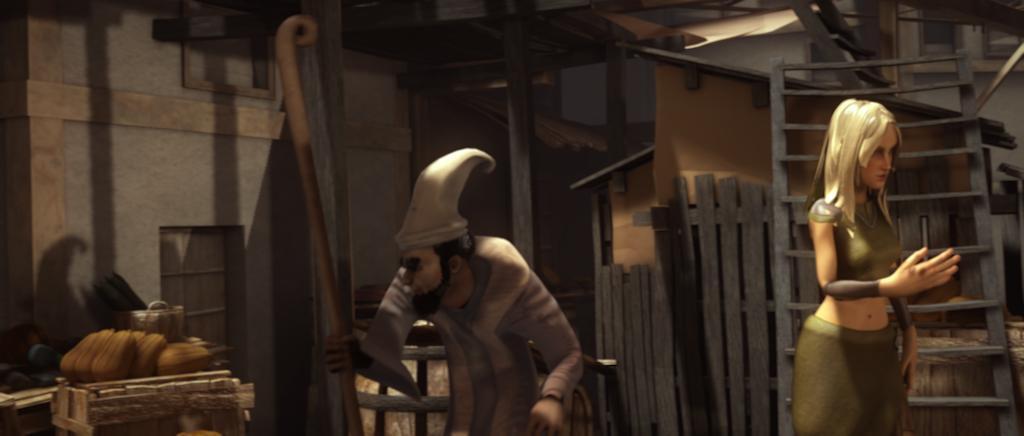}
  \includegraphics[trim={200px 240px 650px 10px},width=\ratio\textwidth,clip=true]{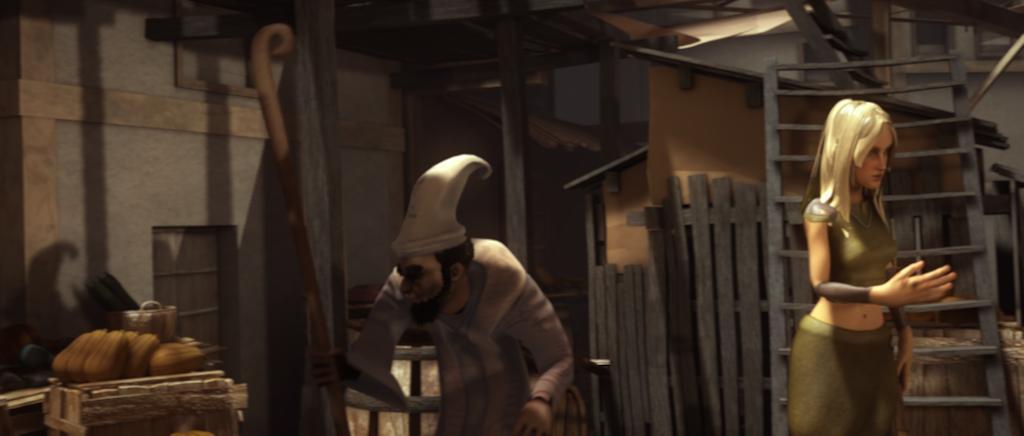}
  \raisebox{12mm}{
    \begin{minipage}{21mm}
      \centering
      \textbf{Style} \\
      \textit{Composition XIV},\\
       Piet Mondrian
    \end{minipage}
  }\\
      \raisebox{11mm}{
    \begin{minipage}[c][21mm][c]{3mm}
      \centering
      \rotatebox{90}{\centering \textbf{Optim Baseline}}
    \end{minipage}
  }
    \includegraphics[trim={200px 240px 650px 10px},width=\ratio\textwidth,clip=true]{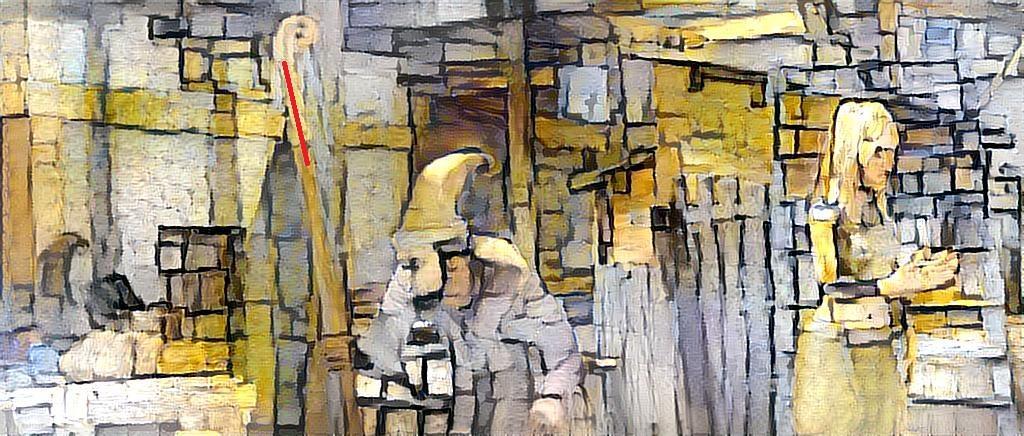}
    \includegraphics[trim={200px 240px 650px 10px},width=\ratio\textwidth,clip=true]{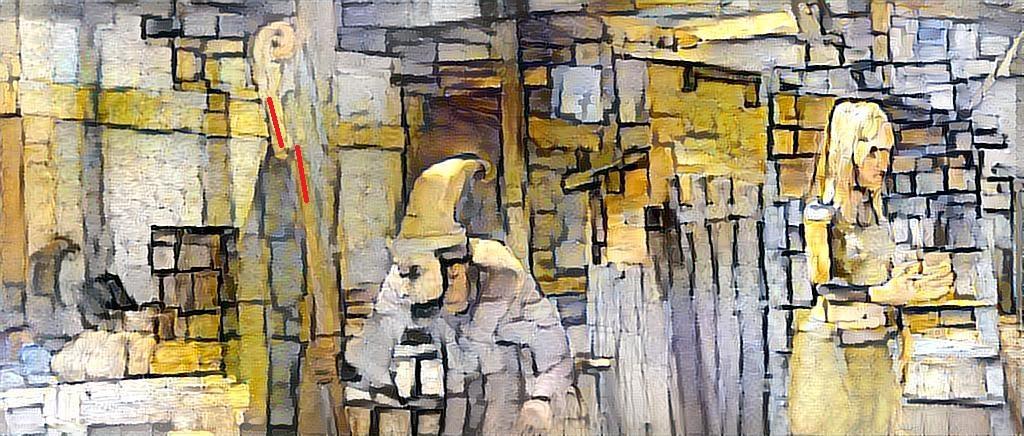}
    \includegraphics[width=23mm, height=24mm]{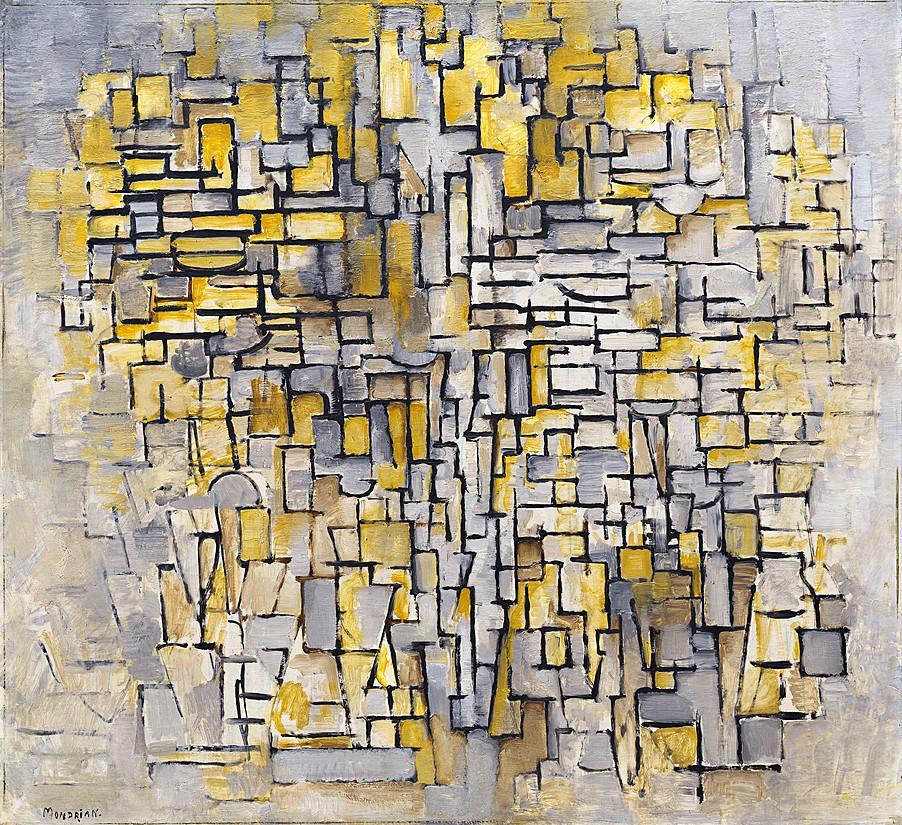}\\
        \raisebox{11mm}{
    \begin{minipage}[c][21mm][c]{3mm}
      \centering
      \rotatebox{90}{\centering \textbf{Ours}}
    \end{minipage}
  }
    \includegraphics[trim={200px 240px 650px 10px},width=\ratio\textwidth,clip=true]{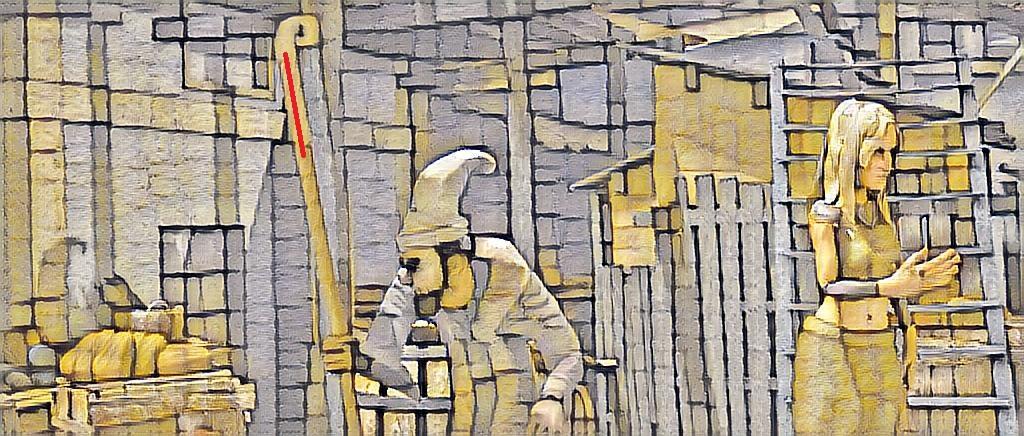} 
    \includegraphics[trim={200px 240px 650px 10px},width=\ratio\textwidth,clip=true]{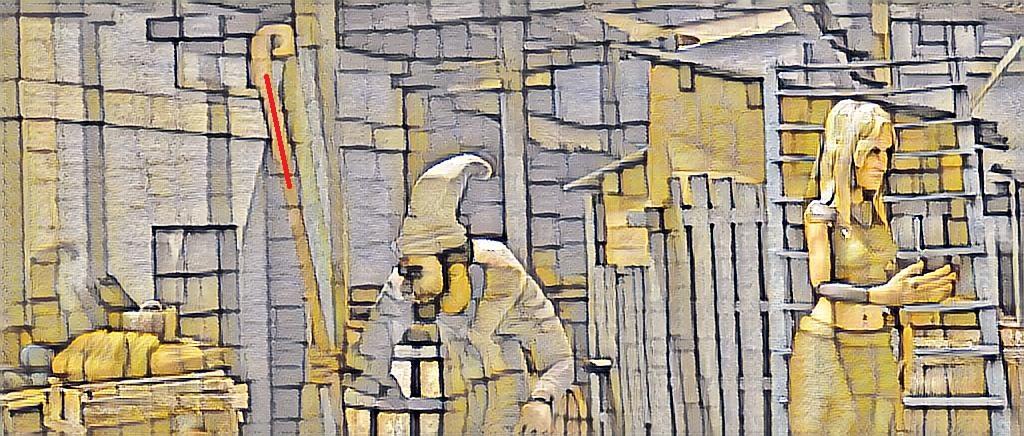} 
    \includegraphics[trim={200px 240px 650px 10px},width=\ratio\textwidth,clip=true]{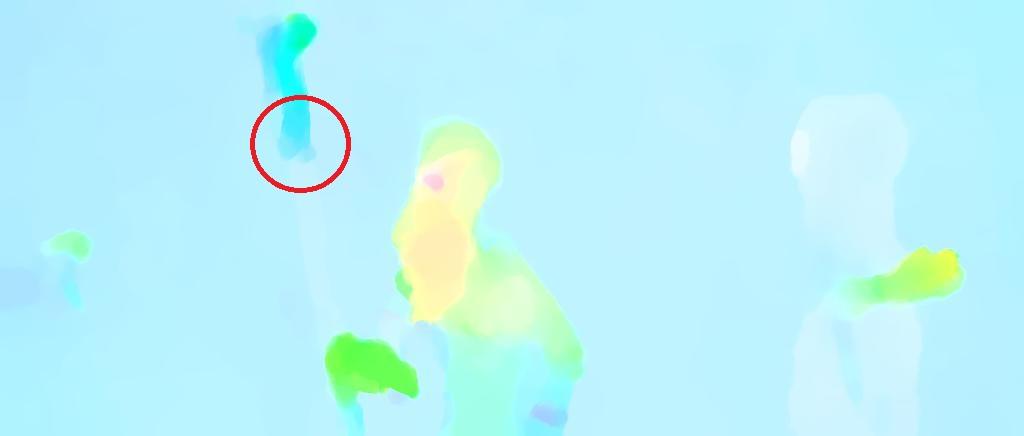}\\
     \raisebox{1mm}{
    \begin{minipage}[c][0.1mm][c]{3mm}
    \end{minipage}
  }
  \begin{minipage}{0.15\textwidth}
  \centering \hspace{0.5\textwidth}$t_1$
  \end{minipage}
    \begin{minipage}{0.15\textwidth}
	\centering $t_2$
  \end{minipage}
  \begin{minipage}{0.15\textwidth}
   Estimated Flow
  \end{minipage}
  }}\caption{Optim baseline is susceptible to errors in optical flow estimation. The estimated optical flow does not capture the motion of the bottom portion of the stick (Bottom-left: Dark blue represents more motion). Hence, at time $t_2$ the stylized image produced by Optim baseline shows the stick to be ``broken'' at the top. Our method does not require explicit optical flow at test time and does suffer from this failure mode (red lines added for emphasis).}
  \label{fig:optical-wrong}
\end{figure}

\def\mywidth{0.17\textwidth}
\def\ratio{0.15}
\begin{figure*}[ht!]
\raisebox{13mm}{
    \begin{minipage}{27mm}
      \centering
      \textbf{Style} \\
      \textit{Portrait de Metzinger},
      Robert Delaunay
    \end{minipage}
  }
    \includegraphics[height=0.13\textwidth]{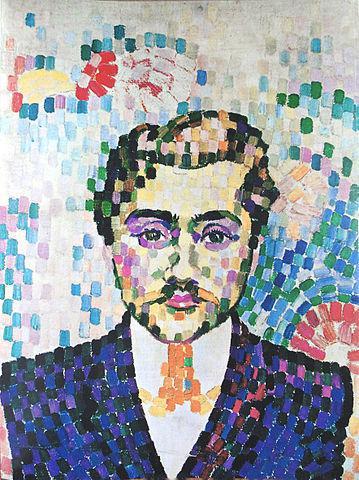}
  \hspace{7mm}
  \raisebox{15mm}{
    \begin{minipage}{25mm}
      \centering
      \textbf{Style} \\
      \textit{Rain Princess},\\
      Leonid Afremov
    \end{minipage}
  }
  \includegraphics[height=0.13\textwidth]{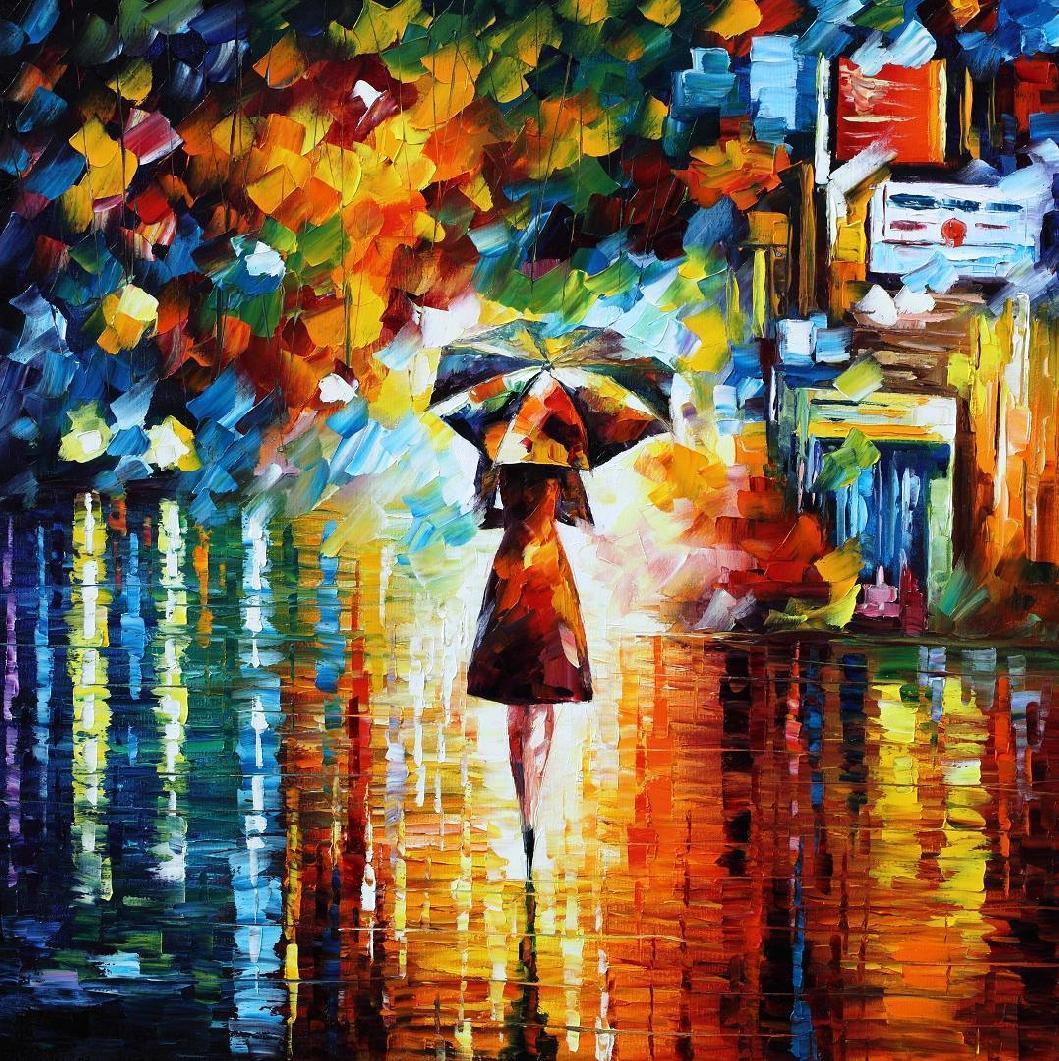}
   \hspace{7mm}
  \raisebox{11mm}{
    \begin{minipage}{25mm}
      \centering
      \textbf{Style} \\
      \textit{The Great Wave off Kanagawa},\\
      	Katsushika Hokusai
    \end{minipage}
  }
  \includegraphics[height=2.5cm,width=0.16\textwidth]{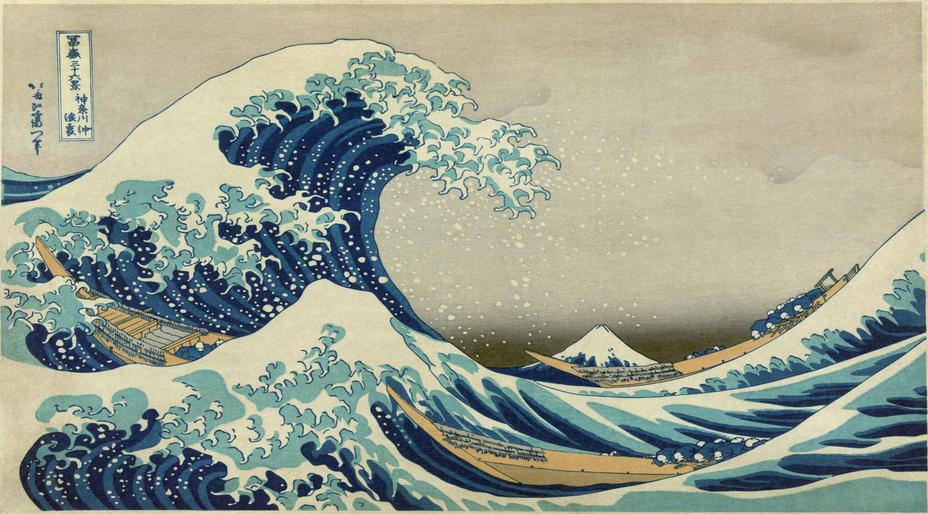}\\
  \hspace{7mm}
    \raisebox{9mm}{
    \begin{minipage}[c][10mm][c]{10mm}
      \centering
      \rotatebox{90}{\centering \textbf{Content}}
    \end{minipage}
  }
     \includegraphics[trim={670px 0px 0px 340px},width=\ratio\textwidth,clip=true]{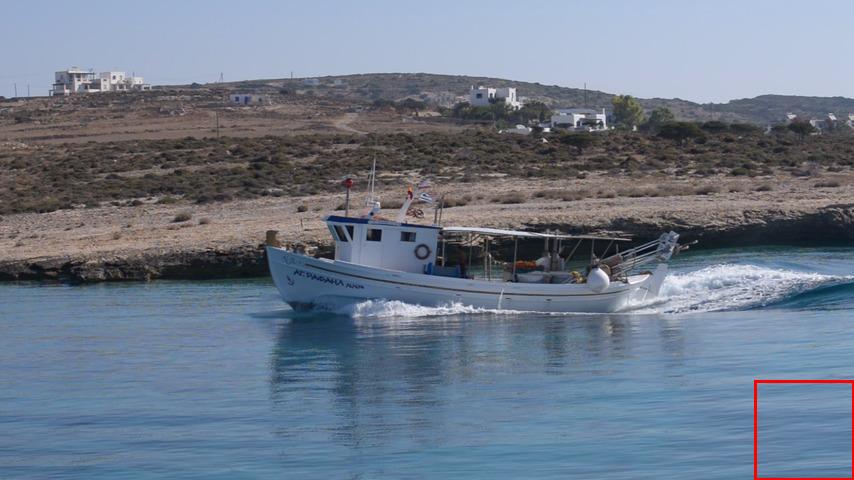}
     \includegraphics[trim={670px 0px 0px 340px},width=\ratio\textwidth,clip=true]{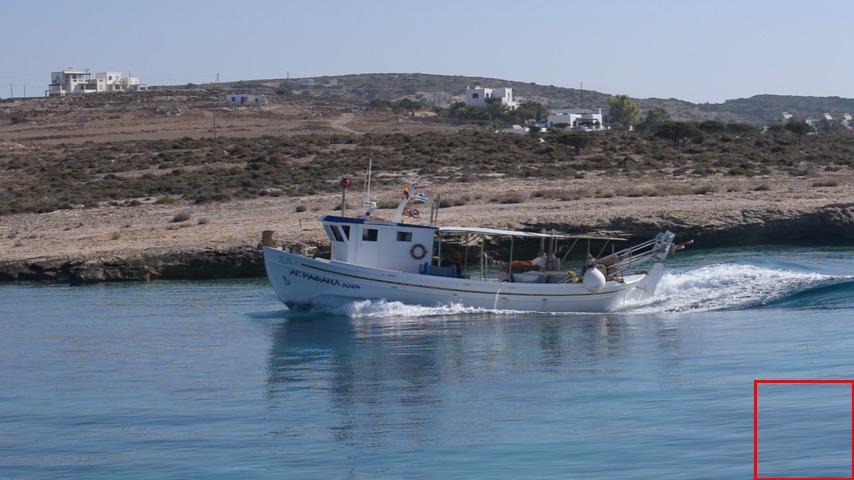} 
     \includegraphics[trim={20px 320px 650px 20px},width=\ratio\textwidth,clip=true]{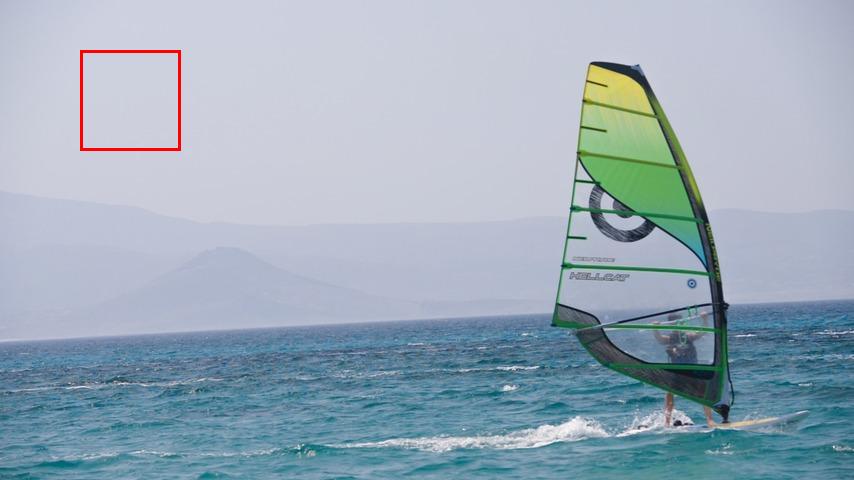}
     \includegraphics[trim={20px 320px 650px 20px},width=\ratio\textwidth,clip=true]{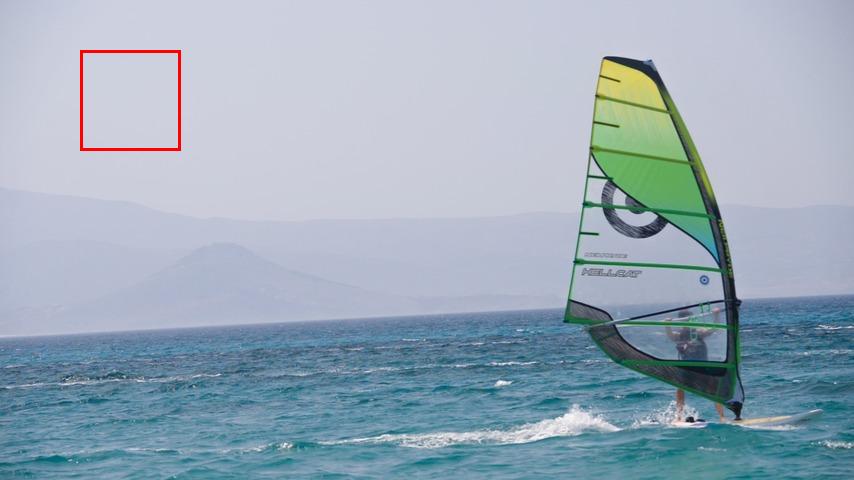} 
     \includegraphics[trim={600px 200px 70px 140px},width=\ratio\textwidth,clip=true]{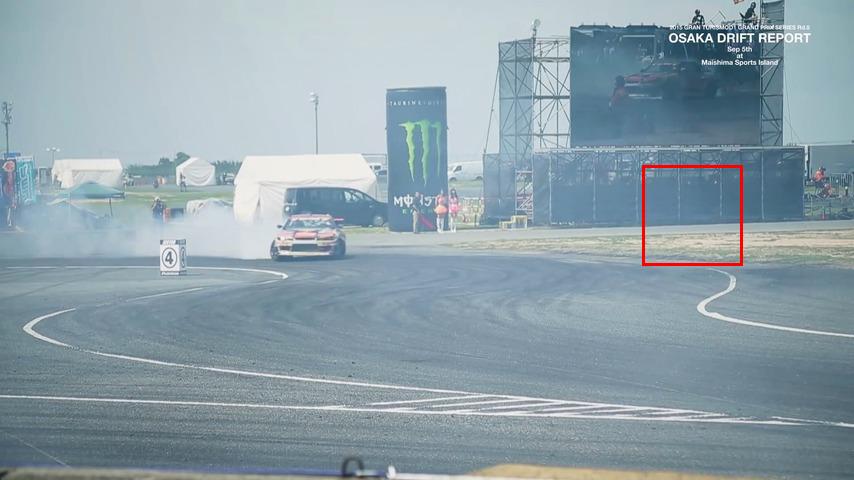}
     \includegraphics[trim={600px 200px 70px 140px},width=\ratio\textwidth,clip=true]{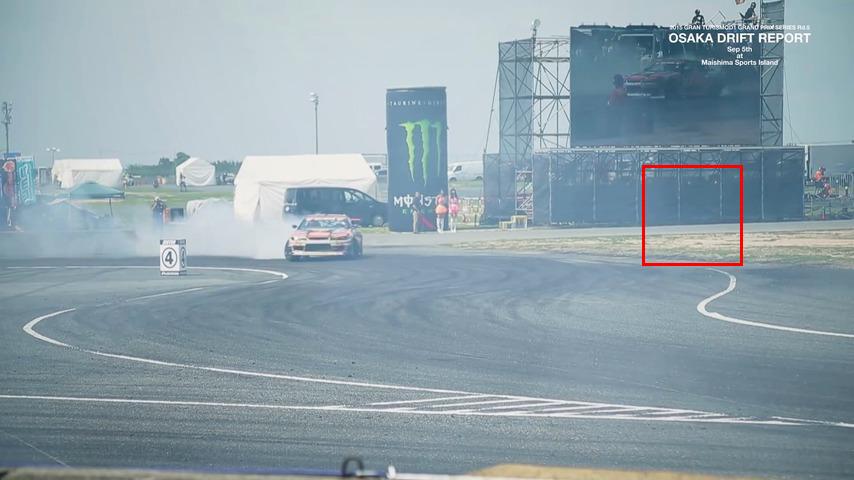}\\
  \hspace{7mm}
  \raisebox{9mm}{
    \begin{minipage}{10mm}
      \centering
      \rotatebox{90}{\parbox{1.6cm}{\centering \textbf{Optim} \\ \textbf{Baseline} \\ \cite{ruder2016artistic}}}
    \end{minipage}
  }
     \includegraphics[trim={670px 0px 0px 340px},width=\ratio\textwidth,clip=true]{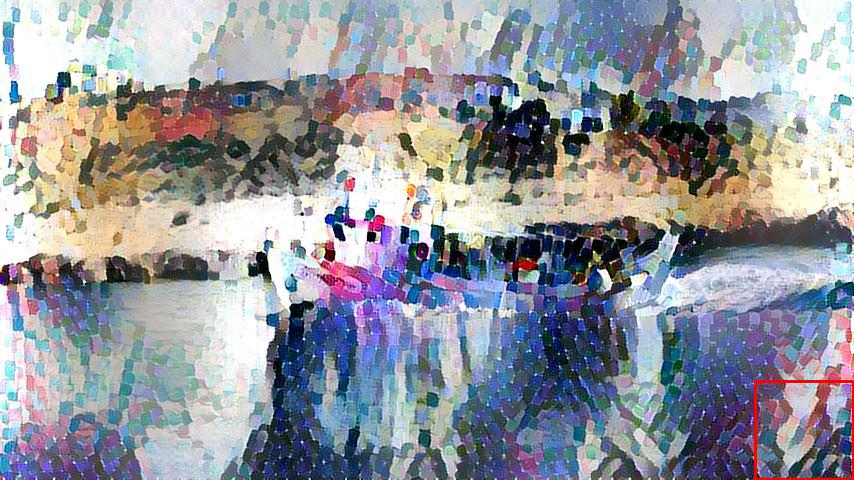}
     \includegraphics[trim={670px 0px 0px 340px},width=\ratio\textwidth,clip=true]{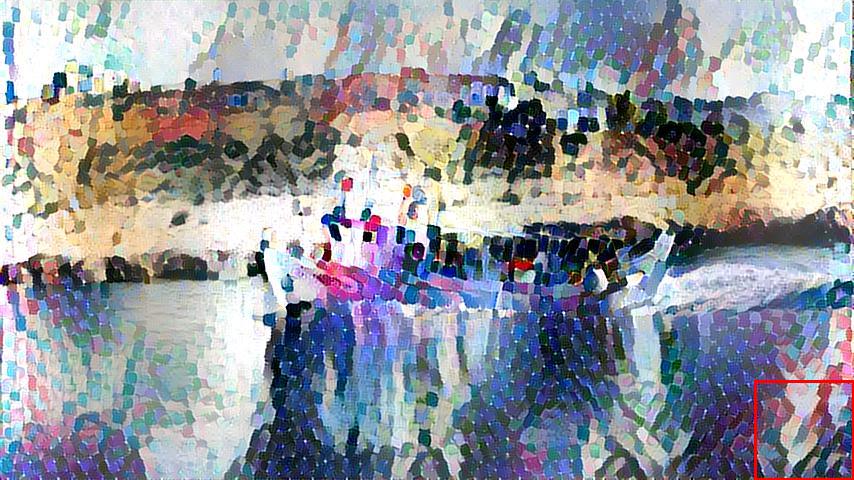}
     \includegraphics[trim={20px 320px 650px 20px},width=\ratio\textwidth,clip=true]{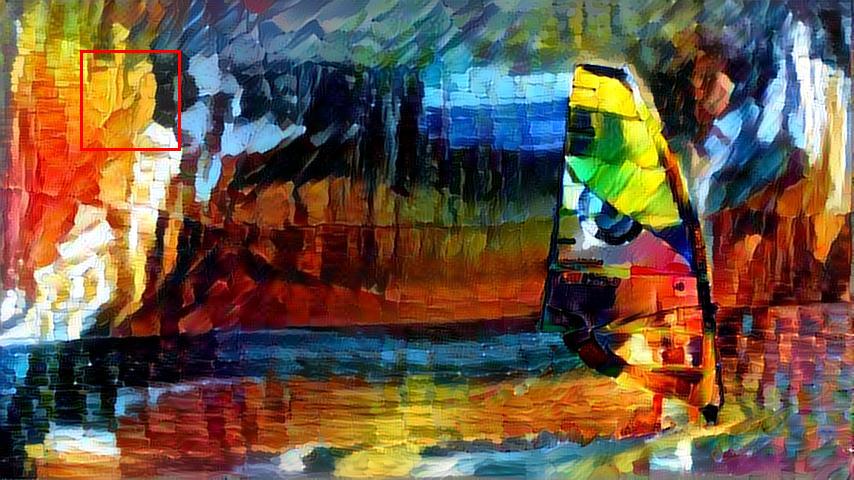}
     \includegraphics[trim={20px 320px 650px 20px},width=\ratio\textwidth,clip=true]{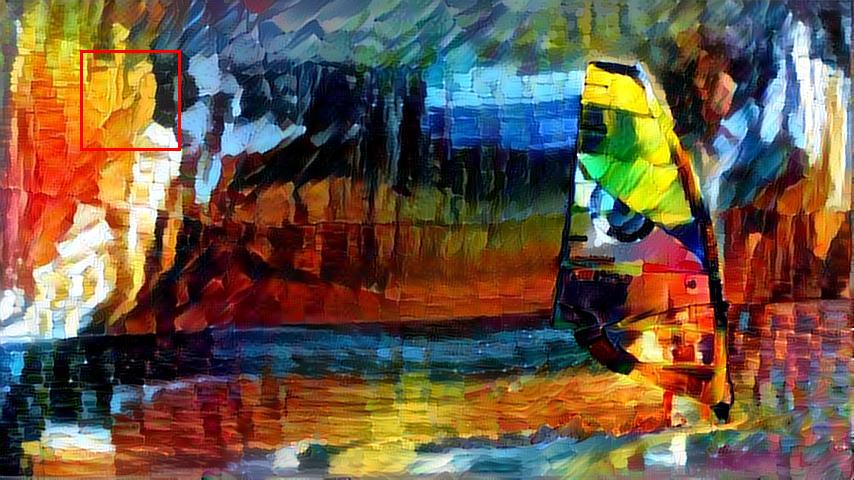} 
     \includegraphics[trim={600px 200px 70px 140px},width=\ratio\textwidth,clip=true]{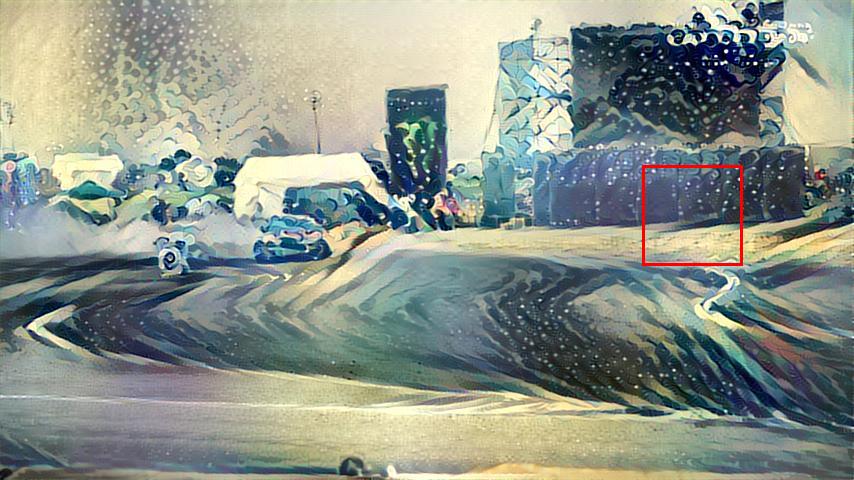}
     \includegraphics[trim={600px 200px 70px 140px},width=\ratio\textwidth,clip=true]{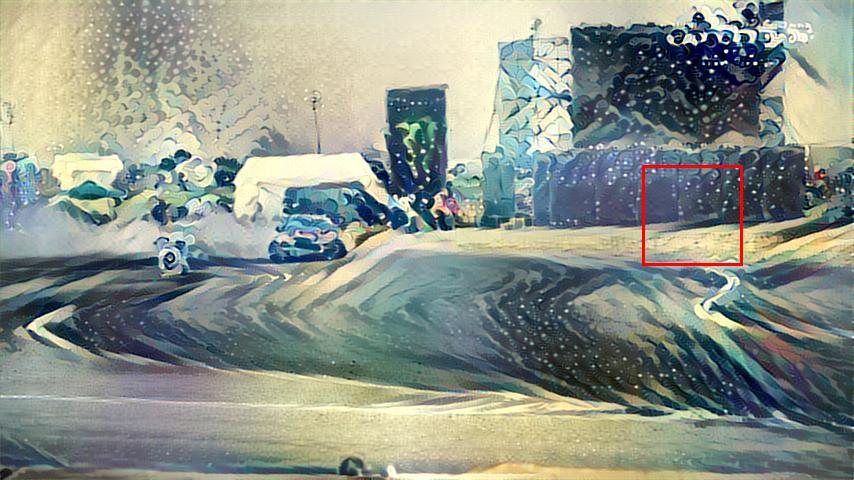} \\
  \hspace{7mm}
  \raisebox{9mm}{
    \begin{minipage}{10mm}
      \centering
      \rotatebox{90}{\parbox{1.6cm}{\centering \textbf{Real-Time} \\ \textbf{Baseline} \\ \cite{john}}}
    \end{minipage}
  }
     \includegraphics[trim={670px 0px 0px 340px},width=\ratio\textwidth,clip=true]{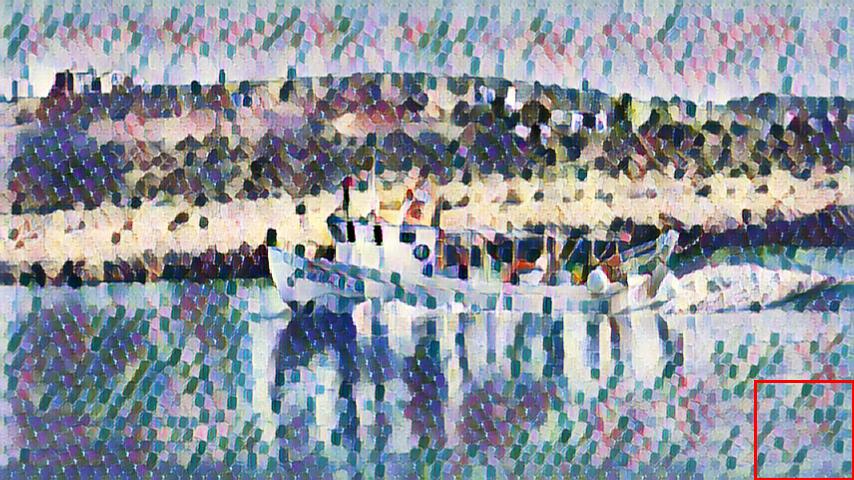}
     \includegraphics[trim={670px 0px 0px 340px},width=\ratio\textwidth,clip=true]{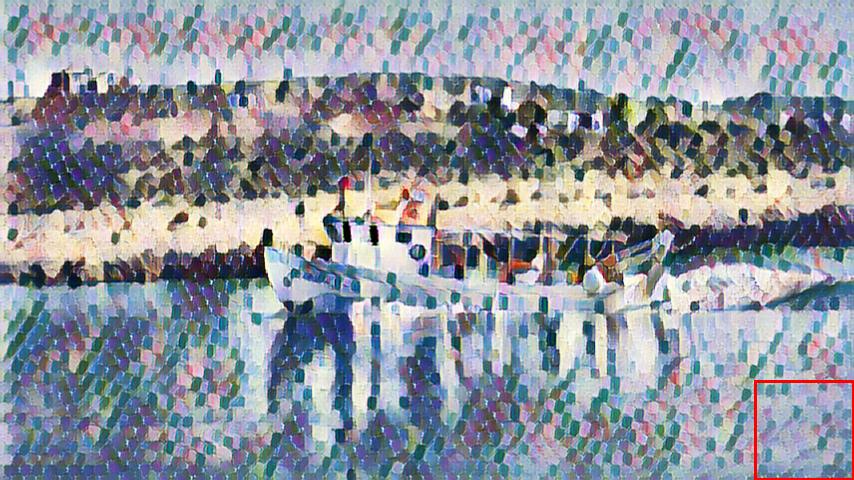}
     \includegraphics[trim={20px 320px 650px 20px},width=\ratio\textwidth,clip=true]{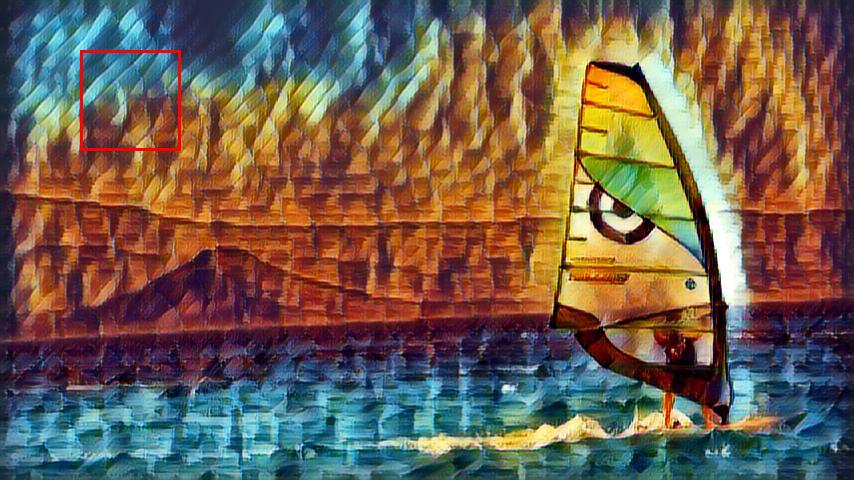}
     \includegraphics[trim={20px 320px 650px 20px},width=\ratio\textwidth,clip=true]{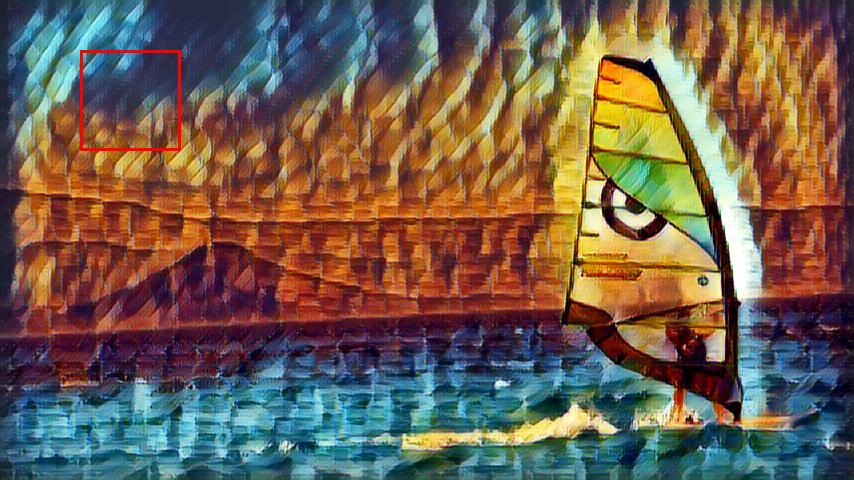} 
     \includegraphics[trim={600px 200px 70px 140px},width=\ratio\textwidth,clip=true]{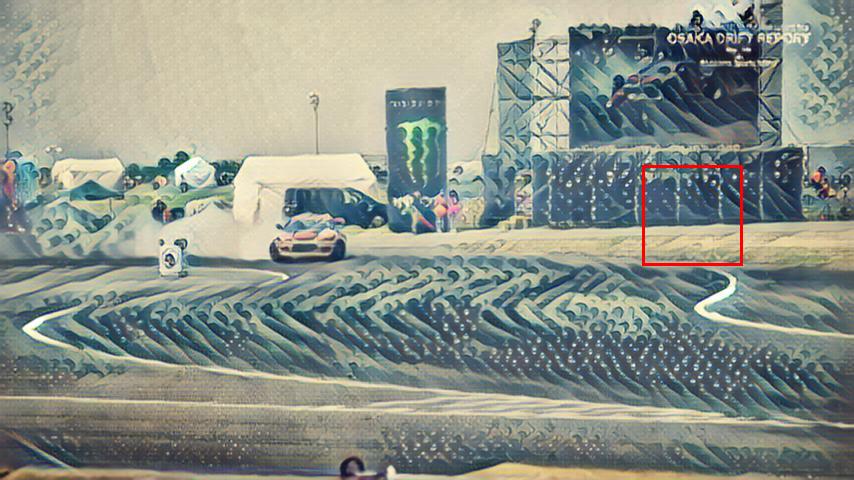}
     \includegraphics[trim={600px 200px 70px 140px},width=\ratio\textwidth,clip=true]{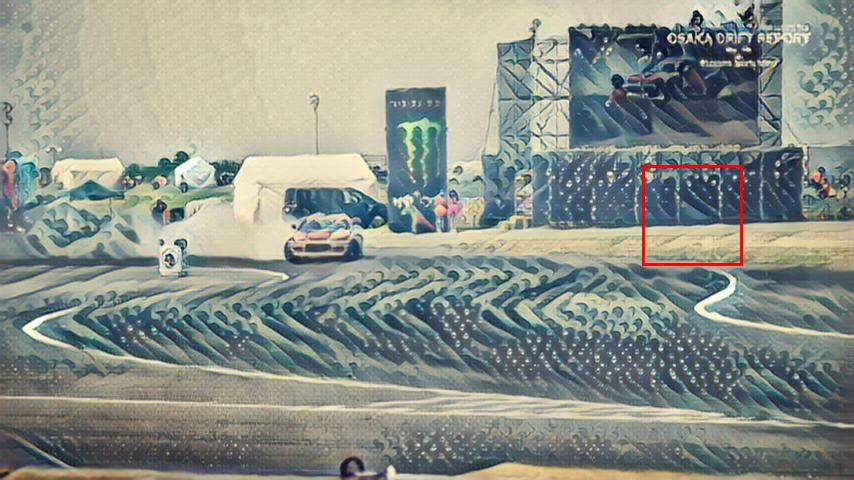} \\
  \hspace{7mm}
  \raisebox{9mm}{
    \begin{minipage}{10mm}
      \centering
      \rotatebox{90}{\parbox{1.6cm}{\centering \textbf{Ours}}}
    \end{minipage}
  }
     \includegraphics[trim={670px 0px 0px 340px},width=\ratio\textwidth,clip=true]{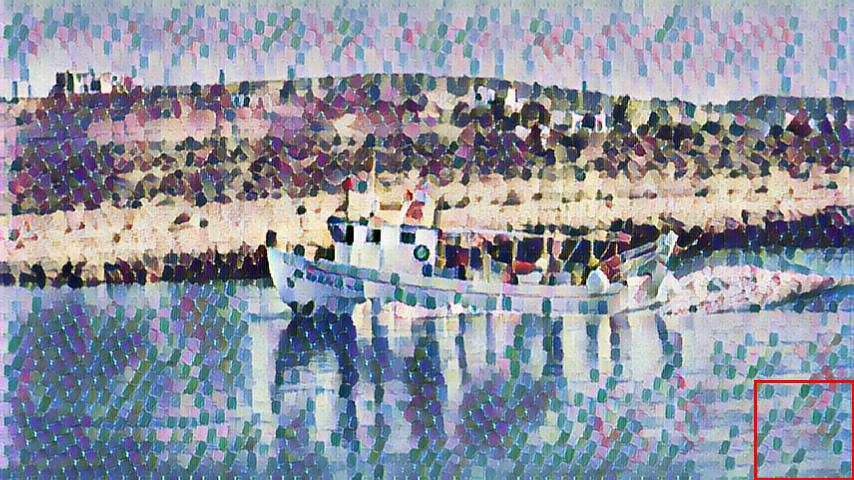}
     \includegraphics[trim={670px 0px 0px 340px},width=\ratio\textwidth,clip=true]{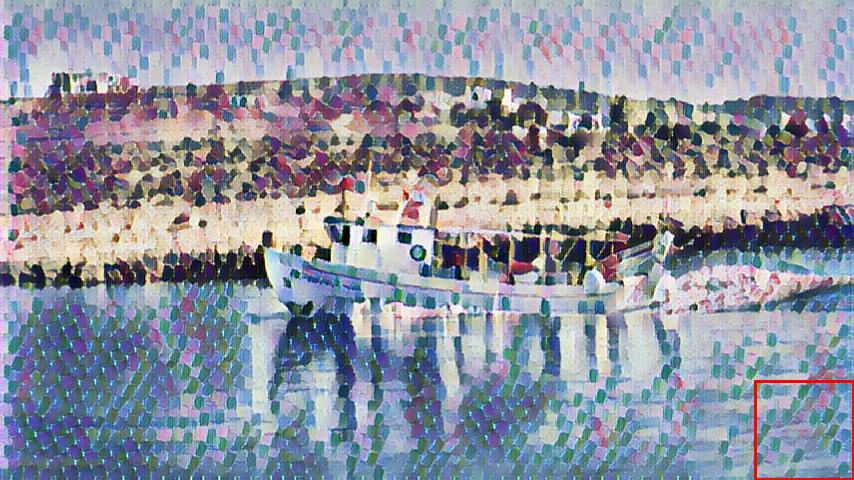}
     \includegraphics[trim={20px 320px 650px 20px},width=\ratio\textwidth,clip=true]{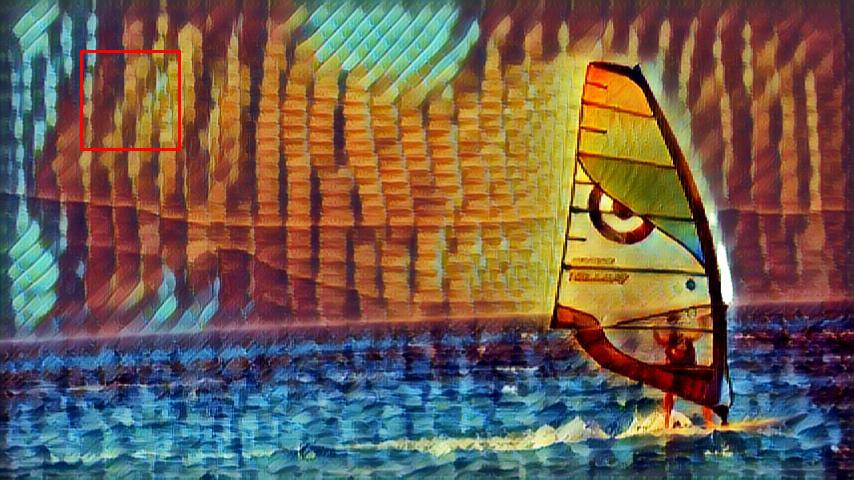}
     \includegraphics[trim={20px 320px 650px 20px},width=\ratio\textwidth,clip=true]{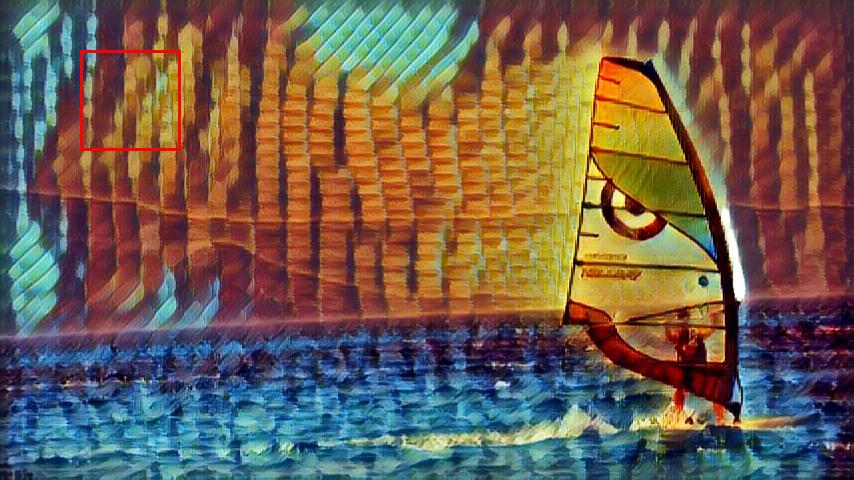} 
     \includegraphics[trim={600px 200px 70px 140px},width=\ratio\textwidth,clip=true]{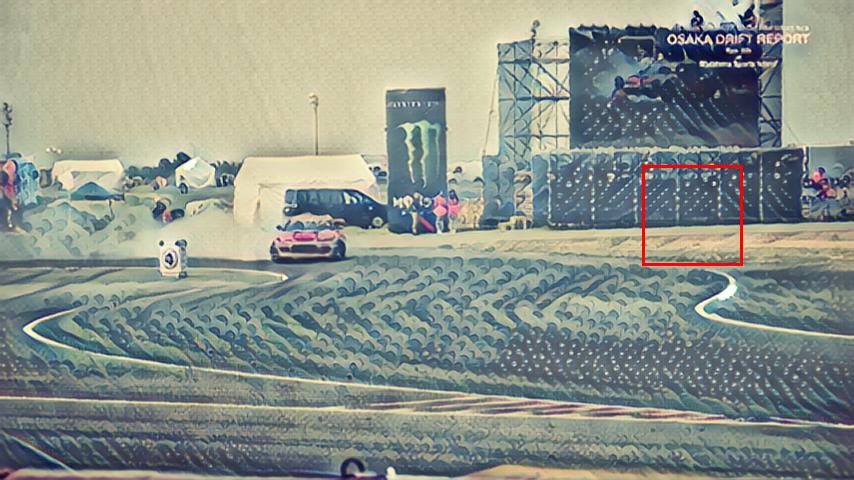}
     \includegraphics[trim={600px 200px 70px 140px},width=\ratio\textwidth,clip=true]{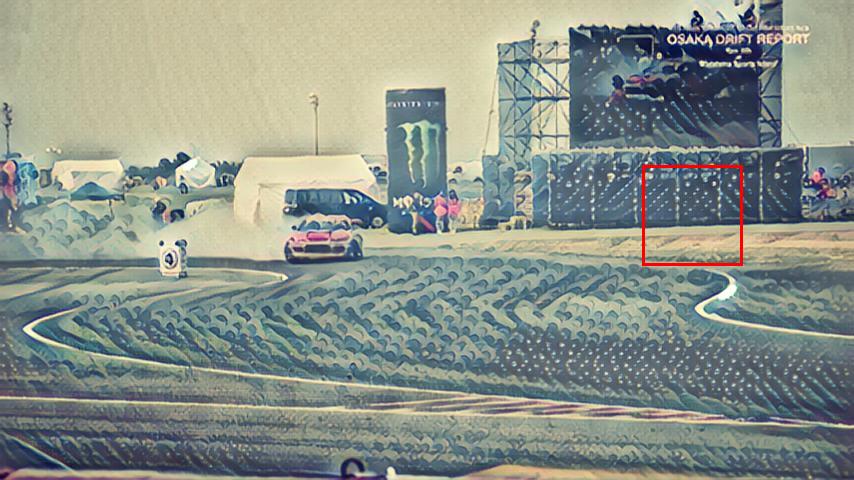} \\[2mm]
\begin{minipage}{0.16\textwidth}
    \centering \textbf{Method} \\ \cite{ruder2016artistic} \\ \cite{john} \\ Ours
  \end{minipage}
  \hspace{-0.04\textwidth}
  \begin{minipage}{0.20\textwidth}
    \centering \textbf{PSNR/SSIM} \\ 23.67 / 0.2078 \\ 23.47 / 0.1856 \\ \textbf{24.14} / \textbf{0.5234}
  \end{minipage}
  \hspace{0.09\textwidth}
    \begin{minipage}{0.20\textwidth}
    \centering \textbf{PSNR/SSIM} \\ \textbf{23.68} / 0.9136 \\ 23.13 / 0.1622 \\ 23.44 / \textbf{0.9240} 
  \end{minipage}
  \hspace{0.10\textwidth}
  \begin{minipage}{0.20\textwidth}
    \centering \textbf{PSNR/SSIM} \\ \textbf{32.53} / \textbf{0.9728} \\ 26.00 / 0.7432 \\ 27.76 / 0.9194
  \end{minipage} \\
  \caption{Examples of consecutive pairs of stylized video frame output. Our method produces stylistically similar frame sequences like Optim baseline. We report the PSNR/SSIM values for each example crop shown. Our method is significantly better in producing temporally consistent frames than real-time baseline for highly unstable styles like \emph{Rain Princess} and \emph{Metzinger}.}
  \label{fig:short-term}
\end{figure*}

\textbf{Transparency.} Our method sometimes gives results with slight object transparency when one object occludes another; for an example see the Supplementary Material. However, this effect typically only occurs for the first frame or two of object occlusion, and thus not very perceptible when viewing videos.

\textbf{Dependency on Optical Flow.} The Optim baseline requires forward and backward optical flow at test time to generate occlusion masks; failures in optical flow estimation can result in poor stylization.

Figure \ref{fig:optical-wrong} shows a crop of two frames from the \emph{Market\_1} video from the Sintel test set. Figure \ref{fig:optical-wrong} (bottom-right) shows the optical field as estimated by state-of-the-art optical flow algorithm \cite{weinzaepfel2013deepflow}. The estimated optical flow incorrectly estimates that only the top portion of the stick is moving. This error propagates to the stylized image: unlike the original pair of images where the whole stick moves, only the top portion of the stick to moves forward in the output produced by the  Optim baseline. Our method does not require ground truth optical flow during test time and consequently does not suffer from this failure mode. \par 

\textbf{User Study.}
We performed a user study on Amazon Mechanical Turk to compare the subjective quality
of our method and the two baselines. In each trial a worker is shown a video from the DAVIS dataset~\cite{Perazzi2016}, a style image, and stylized output videos from two methods. In each trial the worker answers three questions: \emph{``Which video flickers more?''}, \emph{``Which video better matches the style?''}, and \emph{``Overall, which video do you prefer?''}. For each question, the worker can either choose a video or select \emph{``About the same''}. Results are shown in Table \ref{table:userstudy}.

\begin{table}
\centering
\scalebox{1}{
    \begin{tabular}{|c|cc|cc|}
    	\hline
    	& \multicolumn{2}{c|}{Ours vs RT~\cite{john}}
        & \multicolumn{2}{c|}{Ours vs Optim~\cite{ruder2016artistic}} \\
    	Question & Ours & RT & Ours & Optim  \\
        \hline
        More Flicker & \textbf{26} & 193 & 6 & \textbf{4} \\
        Style Match & 70 & \textbf{114} & \textbf{8} & 4 \\
        Overall Prefer & \textbf{133} & 80 & 7 & \textbf{8} \\
        \hline
    \end{tabular}
}
\vspace{1mm}
\caption[caption for LOF]{Summary of user study results. We use 50 videos per style to evaluate our method against the Real-Time (RT) baseline~\cite{john} and 3 videos\protect\footnotemark~to evaluate against the Optim baseline~\cite{ruder2016artistic}. Each pair of videos is evaluated by five workers on Amazon Mechanical Turk. We report the number of videos where the majority of workers preferred one method over another for a particular question. Values in bold are better.} \label{table:userstudy}
\end{table} 
\footnotetext{Again, running the Optim baseline for all 50 videos is infeasible.}

Taken as a whole, this user study shows that our method results in videos with significantly less qualitative flickering than the Real-Time baseline, with temporal stability almost on par with the slower Optim baseline. Our method is perceived to match the style image about as well as other methods, and users prefer the results from our method significantly more often than the Real-Time baseline. We refer the reader to  Supplementary material for further details.

\section{Conclusion}
We studied the stability of the recent style transfer methods based on neural networks. We characterized the instability of style transfer methods based on Gram matrix matching by examining the solution set of the style transfer objective, arguing that instability is exacerbated when the trace of the Gram matrix of the
style image is large.

We then proposed a recurrent convolutional network for real-time video style transfer which overcomes the instability of prior methods. As future work, we want to investigate methods which can encode long-term consistency while maintaining real-time performance, ensuring stylistic consistency of objects which get occluded and then reappear over a sequence of consecutive video frames.

\pagebreak

\paragraph{Acknowledgments.}We thank Jayanth Koushik for helpful comments and discussions. This work is partially supported by an ONR MURI grant.

{\small
\bibliographystyle{ieee}
\bibliography{egbib}
}

\end{document}